\theoremstyle{remark}
\newtheorem{example}{Example}
\def\ps@pprintTitle{%
  \let\@oddhead\@empty
  \let\@evenhead\@empty
  \let\@oddfoot\@empty
  \let\@evenfoot\@oddfoot
}
\newtheorem{definition}{Definition}
\newtheorem{theorem}{Theorem}
\newtheorem{lemma}{Lemma}
\newcommand{\real}{\ensuremath{\mathbb{R}}}
\newcommand{\ltwo}{\ensuremath{\mathbb{L}_2}}
\newcommand{\expect}[2]{\ensuremath{\mathbb{E}_{#1}\!\left[#2\right]}}
\newcommand{\borel}{\ensuremath{\mathcal{P}}}
\DeclareMathOperator{\Tr}{Tr}
\begin{document}

\begin{frontmatter}

\title{Probing the Geometry of Data with Diffusion Fr\'{e}chet Functions}

\author[ad1]{Diego Hern\'{a}n D\'{i}az Mart\'{i}nez}
\author[ad2]{Christine H. Lee}
\author[ad2,ad3]{Peter T. Kim}
\author[ad1]{Washington Mio}

\address[ad1]{Department of Mathematics, Florida State University, Tallahassee,
FL 32306-4510 USA}
\address[ad2]{Department of Pathology and Molecular Medicine,
McMaster University,
St Joseph's Healthcare,
50 Charlton Avenue E, 424 Luke Wing,
Hamilton ON L8N4A6 Canada}
\address[ad3]{Department of Mathematics and Statistics,
University of Guelph,
Guelph ON N1G 2W1 Canada}

%
%
%

\begin{abstract}
Many complex ecosystems, such as those formed by multiple microbial taxa, involve intricate interactions amongst various sub-communities. The most basic relationships are frequently modeled as co-occurrence networks in which the nodes represent the various players in the community and the weighted edges encode levels of interaction. In this setting, the composition of a community may be viewed as a probability distribution on the nodes of the network. This paper develops methods for modeling the organization of such data, as well as their Euclidean counterparts,
across spatial scales. Using the notion of diffusion distance, we introduce {\em diffusion Fr\'{e}chet functions\/} and  {\em diffusion Fr\'{e}chet vectors\/} associated with probability distributions on Euclidean space and the vertex set of a weighted network, respectively. We prove that these functional statistics are stable with respect to the Wasserstein distance between probability measures, thus yielding robust descriptors of their shapes. 

We apply the methodology to investigate bacterial communities in the human gut, seeking to characterize divergence from intestinal homeostasis  in patients with {\em Clostridium difficile\/} infection (CDI) and the effects of fecal microbiota transplantation, a treatment used in CDI patients that has proven to be significantly more effective than traditional treatment with antibiotics. The proposed method proves useful in deriving a biomarker that might help elucidate the mechanisms that drive these processes.
\end{abstract}

\begin{keyword}
Fr\'{e}chet functions \sep diffusion distances \sep co-occurrence networks 
\sep {\em Clostridium difficile\/} infection \sep fecal microbiota transplantation 
\MSC[2010] 62-07 \sep 92C50
\end{keyword}

\end{frontmatter}

\section{Introduction}

The state of a complex ecosystem is frequently described by a probability
distribution on the nodes of a weighted network. For example, a microbial
community may be modeled on a network in which each node represents a
taxon and the main interactions between pairs of taxa are represented
by weighted edges, the weights reflecting the levels of interaction. In
this formulation, bacterial relative abundance in a sample may be viewed as a
probability distribution $\xi$ on the vertex set $V$ of the network.
Thus, $\xi$ describes the composition of the community whereas
the underlying network encodes the expected interactions
amongst the various taxa. Identifying sub-communities and their 
organization at different scales from such data, quantifying variation
across samples, and integrating information across scales are core
problems. Similar questions arise in other contexts such as
probability measures defined on Euclidean spaces or other
metric spaces. To address these problems, we develop methods that
(i) capture the geometry of data and probability measures across
a continuum of spatial scales and (ii) integrate information obtained
at all scales. In this paper, we focus on the Euclidean and network cases. 
We analyze distributions with the aid of a 1-parameter family of diffusion
metrics $d_t$, $t>0$, where $t$ is treated as the scale parameter
\cite{coifman}. For (Borel) probability measures $\alpha$ on $\real^d$,
our approach is based on a functional statistic
$V_{\alpha, t} \colon \real^d \to \real$ derived from $d_t$ and termed
the {\em diffusion Fr\'{e}chet function\/} of $\alpha$ at scale $t$. Similarly,
we define {\em diffusion Fr\'{e}chet vectors} $F_{\xi, t}$ associated
with a  distribution $\xi$ on the vertex set of a network. 

The (classical) Fr\'{e}chet function of a random variable $y \in \real^d$ 
distributed according to a probability measure $\alpha$ with
finite second moment is defined as
\begin{equation} \label{E:frechet1}
V_\alpha (x) = \expect{\alpha}{\|y-x\|^2}
= \int_{\real^d} \|y-x\|^2 \, d\alpha (y) \,.
\end{equation} 
The Fr\'{e}chet function quantifies the scatter of $y$ about the point $x$
and depends only on $\alpha$. It is well-known that the {\em mean\/}
(or {\em expected value}) of $y$ is the unique minimizer of $V_\alpha$.
For complex distributions, aiming at more effective descriptors,
we introduce diffusion Fr\'{e}chet functions and show that they encode
a wealth of information about the shape of $\alpha$. At scale $t>0$,
we define the {\em diffusion Fr\'{e}chet function\/}
$V_{\alpha,t} \colon \real^d \to \real$ by
\begin{equation} \label{E:frechet2}
V_{\alpha,t} (x) = \expect{\alpha}{d_t^2 (y,x)} = \int_{\real^d}
d_t^2 (y,x) \, d\alpha (y) \,.
\end{equation} 
This definition simply replaces the Euclidean distance in
\eqref{E:frechet1} with the diffusion distance $d_t$. $V_{\alpha,t} (x)$
may be viewed as a localized second moment of $\alpha$ about $x$.
In the network setting, Fr\'{e}chet vectors are defined in a similar manner
through the diffusion kernel associated with the graph Laplacian.
Unlike $V_\alpha$ that is a quadratic function, diffusion Fr\'{e}chet functions
and vectors typically exhibit rich profiles that reflect the shape of the distribution at
different scales. We illustrate this point with simulated data
and exploit it in an application to the analysis
of microbiome data associated with {\em Clostridium difficile} infection (CDI).
On the theoretical front, we prove stability theorems for diffusion Fr\'{e}chet
functions and vectors with respect to the Wasserstein distance between
probability measures \cite{villani09}. The stability results ensure that both
$V_{\alpha, t}$ and $F_{\xi, t}$ yield
robust descriptors, useful for data analysis. 

As explained in detail below, $V_{\alpha,t}$ is closely related to the
solution of the heat equation $\partial_t u = \Delta u$ with initial condition
$\alpha$. In particular, if $p_1, \ldots, p_n \in \real^d$ are data
points sampled from $\alpha$, then the diffusion Fr\'{e}chet functions for
the empirical measure  $\alpha_n = \frac{1}{n} \sum_{i=1}^n \delta_{p_i}$
give a reinterpretation of Gaussian density estimators derived from
the data as localized second moments of $\alpha_n$. 
An interesting consequence of this fact is that, for the Gaussian
kernel, the scale-space model for data on the real line investigated
in \cite{scalespace} may be recast as a 1-parameter family of
diffusion Fr\'{e}chet functions.

The rest of the paper is organized as follows. Section \ref{S:distance}
reviews the concept of diffusion distance in the Euclidean and network
settings. We define multiscale diffusion Fr\'{e}chet functions in Section
\ref{S:efrechet} and prove that they are stable with respect to the
Wasserstein distance between probability measures defined in Euclidean spaces. The network
counterpart is developed in Section \ref{S:nfrechet}. In 
Section \ref{S:cdi} we describe and analyze microbiome data
associated with {\em C. difficile} infection. We
conclude with a summary and some discussion in
Section \ref{S:remarks}.

\section{Diffusion Distances} \label{S:distance} 

Our multiscale approach to probability measures uses a formulation
derived from the notion of diffusion distance \cite{coifman}. In
this section, we review diffusion distances associated with the heat
kernel on $\real^d$, followed by a discrete analogue for weighted
networks.

For $t>0$, let $G_t \colon \real^d \times \real^d \to \real$
be the diffusion (heat) kernel
\begin{equation} \label{E:gauss}
G_t (x,y) = \frac{1}{C_d (t)} \exp \left( -\frac{\|y-x\|^2}{4t} \right) \,,
\end{equation}
where $C_d (t) = (4 \pi t)^{d/2}$. If an initial distribution of
mass on $\real^d$ is described by a Borel probability measure $\alpha$
and its time evolution is governed by the heat equation
$\partial_t u = \Delta u$, then the distribution at time $t$ has a smooth
density function $\alpha_t (y) = u (y,t)$ given by the convolution
\begin{equation}
u (y, t) = \int_{\real^d} G_t (x,y) \, d\alpha (x) \,.
\end{equation} For an initial point mass located at $x \in \real$,
$\alpha$ is Dirac's delta $\delta_x$ and $u (y,t) = G_t(x, y)$.

The diffusion map $\Psi_t \colon \real^d \to \ltwo (\real^d)$,
given by $\Psi_t (x) = G_t (x, \cdot)$, embeds $\real^d$ into
$\ltwo (\real^d)$. The {\em diffusion distance} $d_t$ on $\real^d$ is
defined as
\begin{equation}
d_t (x_1, x_2) = \|\Psi_t (x_1) - \Psi_2 (x_2) \|_2 \,,
\end{equation} the metric that $\real^d$ inherits from $\ltwo (\real^d)$
via the embedding $\psi_t$.  A calculation shows that 
\begin{equation} \label{E:difdistance}
\begin{split}
d^2_t (x_1, x_2) &=  \|\Psi_t (x_1)\|^2_2  + \|\Psi_t (x_2)\|^2_2 
- 2G_{2t}(x_1,x_2) \\
&= 2 \left( \frac{1}{C_d (2t)} - G_{2t} (x_1, x_2) \right) \,,
\end{split}
\end{equation} 
where we used the fact that $\|\Psi_t (x)\|^2_2 =
1/ C_d (2t)$, for every $x \in \real^d$. Note that this implies that
the metric space $(\real^d, d_t)$ has finite diameter. 

Diffusion distances on weighted networks may be defined in a
similar way by invoking the graph Laplacian and the associated
diffusion kernel.  Let $v_1, \ldots, v_n$ be the nodes of a weighted
network $K$. The weight of the edge between $v_i$ and $v_j$ is
denoted $w_{ij}$, with the convention that $w_{ij} = 0$ if there is
no edge between the nodes. We let $W$ be the $n\times n$
matrix whose $(i,j)$-entry is $w_{ij}$.  The graph Laplacian is the
$n \times n$ matrix $\Delta = D-W$, where $D$ is the diagonal matrix with
$d_{ii}=\sum_{k=1}^{n}w_{ik}$, the sum of the weights of all
edges incident with the node $v_i$ (cf.\,\cite{lux}). This definition is based
on a finite difference discretization of the Laplacian, except for a sign
that makes $\Delta$ a positive semi-definite, symmetric matrix. 
The diffusion (or heat) kernel  at $t>0$ is the matrix
$e^{-t\Delta}$.

Let $\xi$ be a probability distribution on the vertex set $V$ of $K$.
If $\xi_i \geq 0$, $1 \leq i \leq n$, is the probability of the vertex $v_i$,
we write $\xi$ as the vector $\xi = [\xi_1 \, \ldots \, \xi_n]^T \in \real^n$,
where $T$ denotes transposition. Clearly, $\xi_1 + \ldots + \xi_n =1$.
Note that $u = e^{-t\Delta} \xi$ solves the heat equation
$\partial_t u = - \Delta u$ with initial condition $\xi$. (The negative sign
is due to the convention made in the definition of $\Delta$.) Mass initially
distributed according to $\xi$, whose diffusion is governed by the heat equation,
has time $t$ distribution $u_t = e^{-t\Delta} \xi$. A point mass at the
$i$th vertex is described by the vector $e_i \in \real^n$, whose $j$th entry
is $\delta_{ij}$. Thus, $e^{-t\Delta} e_i$ may be viewed as a network
analogue of the Gaussian $G_t (x, \cdot)$. For $t>0$, the diffusion
mapping $\psi_t \colon V \to \real^n$ is defined by
$\psi_t (v_i) = e^{-t\Delta} e_i$ and the time $t$ {\em diffusion distance\/}
between the vertices $v_i$ and $v_j$ by
\begin{equation}
d_t (i,j) = \|e^{-t\Delta} e_i - e^{-t\Delta} e_j\| \,,
\end{equation}
where $\|\cdot\|$ denotes Euclidean norm. If 
$0 = \lambda_1 \leq \ldots \leq \lambda_n$ are the eigenvalues of $\Delta$
with orthonormal eigenvectors $\phi_1, \ldots, \phi_n$, then
\begin{equation}
d_{t}^2(i,j)= \sum_{k=1}^{n}e^{-2\lambda_kt}\left(\phi_k(i)-\phi_k(j)\right)^2 \,,
\end{equation} where $\phi_k (i)$ denotes the $i$th component of $\phi_k$ \cite{coifman}. 


\section{Diffusion Fr\'{e}chet Functions} 
\label{S:efrechet}


The classical Fr\'{e}chet function $V_\alpha$ of a probability
measure $\alpha$ on $\real^d$
with finite second moment is a useful statistic. However, for complex
distributions, such as those with a multimodal profile or more intricate
geometry, their Fr\'{e}chet functions usually fail to provide a good
description of their shape. Aiming at more effective descriptors,
we introduce diffusion Fr\'{e}chet functions and show that they encode
a wealth of information across a full range of spatial scales. At scale
$t>0$, define the {\em diffusion Fr\'{e}chet function\/}
$V_{\alpha,t} \colon \real^d \to \real$ by
\begin{equation} 
V_{\alpha,t} (x) = \int_{\real^d}
d_t^2 (y,x) \, d\alpha (y) \,.
\end{equation} 
Note that $V_{\alpha,t}$ is defined
for any Borel probability measure $\alpha$, not just those with finite
second moment, because $(\real^d, d_t)$ has finite diameter.  Moreover,
$V_{\alpha,t}$ is uniformly bounded. We also point out that this
construction is distinct from the standard kernel trick
that pushes $\alpha$ forward to a probability measure $(\psi_t)_\ast (\alpha)$
on $\ltwo (\real^d)$, where the usual Fr\'{e}chet function of
$(\psi_t)_\ast (\alpha)$ can be used to define such notions as an
extrinsic mean of $\alpha$. $V_t$ is an intrinsic statistic and typically a
function on $\real^d$ with a complex profile, as we shall see in the
examples below.
From \eqref{E:frechet2} and \eqref{E:difdistance}, it follows that
\begin{equation} \label{E:frechet3}
\begin{split}
V_{\alpha,t/2} (x) = \frac{2}{C_d (t)} - 2 \int_{\real^d}
G_{t} (x,y) \, d \alpha (y)
= \frac{2}{C_d (t)} - 2  \alpha_{t} (x) \,,
\end{split}
\end{equation}
where $\alpha_{t} (x) = u(x, t)$, the solution of the heat equation
$\partial_t u = \Delta u$ with initial condition $\alpha$.

If $p_1, \ldots, p_n \in \real^d$ are data points sampled from
$\alpha$, then the diffusion Fr\'{e}chet function of
the empirical measure $\alpha_n = \frac{1}{n} \sum_{i=1}^n \delta_{p_i}$
is closely related to the Gaussian density estimator
$\widehat{\alpha}_{n, t} = \frac{1}{n} \sum_{i=1}^n G_{t} (x, p_i)$
derived from the sample points.
Indeed, it follows from \eqref{E:frechet3} that
\begin{equation}
\begin{split}
V_{\alpha_n,t/2} (x) = \frac{2}{C_d (t)} - 2 \int_{\real^d}
G_{t} (x,y) \, d \alpha_n (y)
= \frac{2}{C_d (t)} - 2  \widehat{\alpha}_{n,t} (x) \,. 
\end{split}
\end{equation}
Thus, diffusion Fr\'{e}chet functions provide a new interpretation
of Gaussian density estimators, essentially as second moments
with respect to diffusion distances. This opens up interesting new
perspectives. For example, the
classical Fr\'{e}chet function $V_\alpha \colon \real^d \to \real$
(see \eqref{E:frechet1}) may be viewed as the trace of the
covariance tensor field $\Sigma^\alpha \colon \real^d \to \real^d
\otimes \real^d$ given by
\begin{equation}
\Sigma_\alpha (x) = \expect{\alpha}{(y-x) \otimes (y-x)}
= \int_{\real^d} (y-x)\otimes (y-x) \, d \alpha (y) \,.
\end{equation}
Thus, it is natural to ask: {\em How to define diffusion covariance
tensor fields $\Sigma_{\alpha,t}$ that capture the
modes of variation of $\alpha$, about each $x \in \real^d$ at all
scales, bearing a close relationship to $V_{\alpha,t}$?}
A multiscale approach to data along related lines has been
developed in \cite{mmm13,diaz1}.

\begin{example}
Here we consider the dataset highlighted in blue in
Figure \ref{F:evolution}, comprising $n=400$ points
$p_1, \ldots, p_n$ on the real line, grouped into two clusters. The
figure shows the evolution of the diffusion Fr\'{e}chet function
for the empirical measure $\alpha_n = \sum_{i=1}^n \delta_{p_i} /n$
for increasing values of the scale parameter. 
\begin{figure}[h!]
\begin{center}
\begin{tabular}{cc}
   \includegraphics[width=0.4\linewidth]{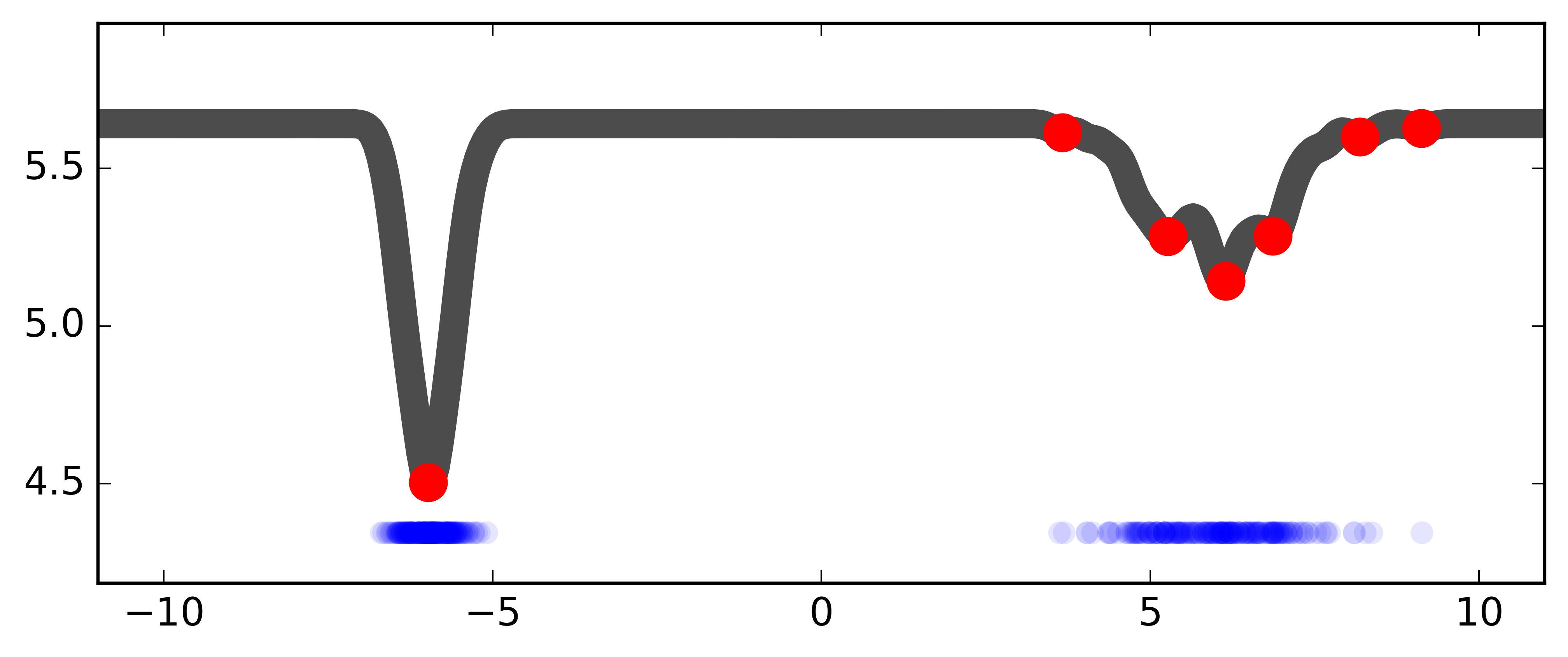} 
   \quad & \quad
   \includegraphics[width=0.4\linewidth]{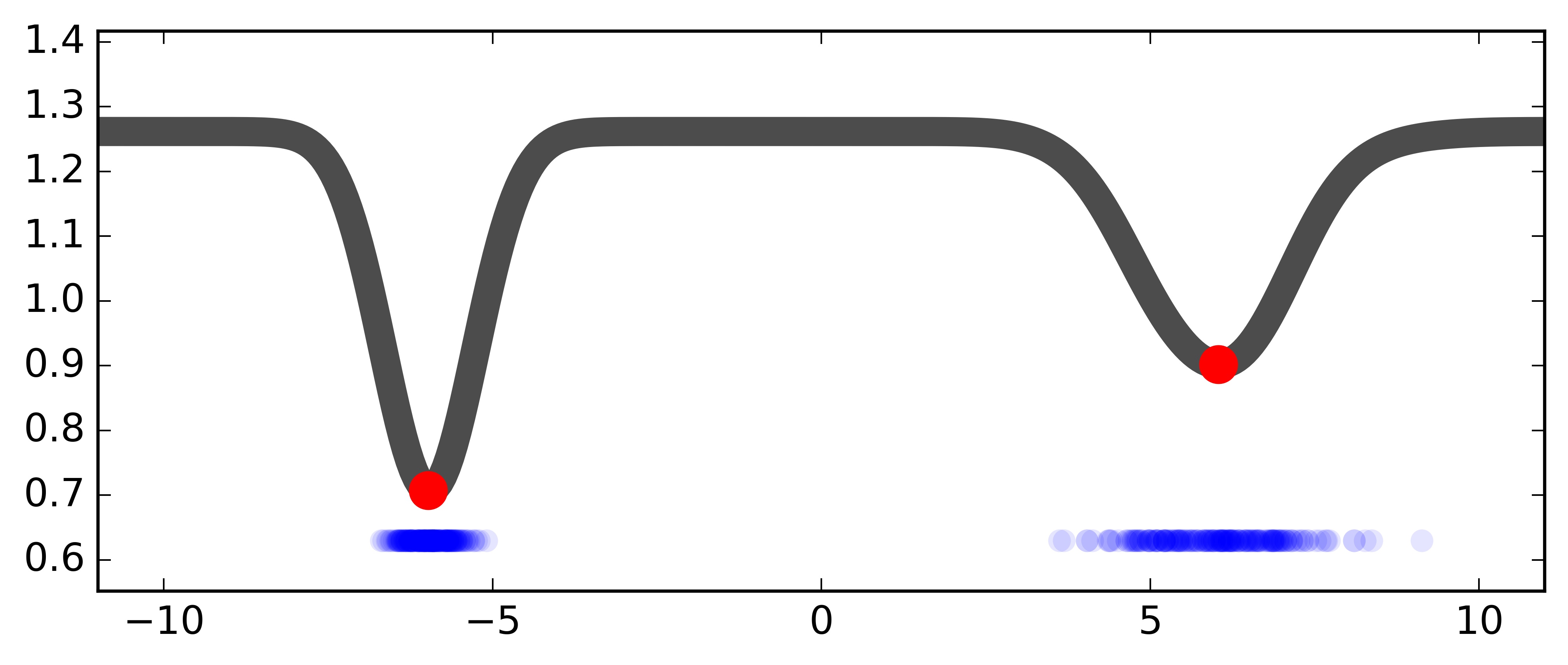} \\
   $t = 0.005$ \quad & \quad $t = 0.1$ \vspace{0.1in} \\
   \includegraphics[width=0.4\linewidth]{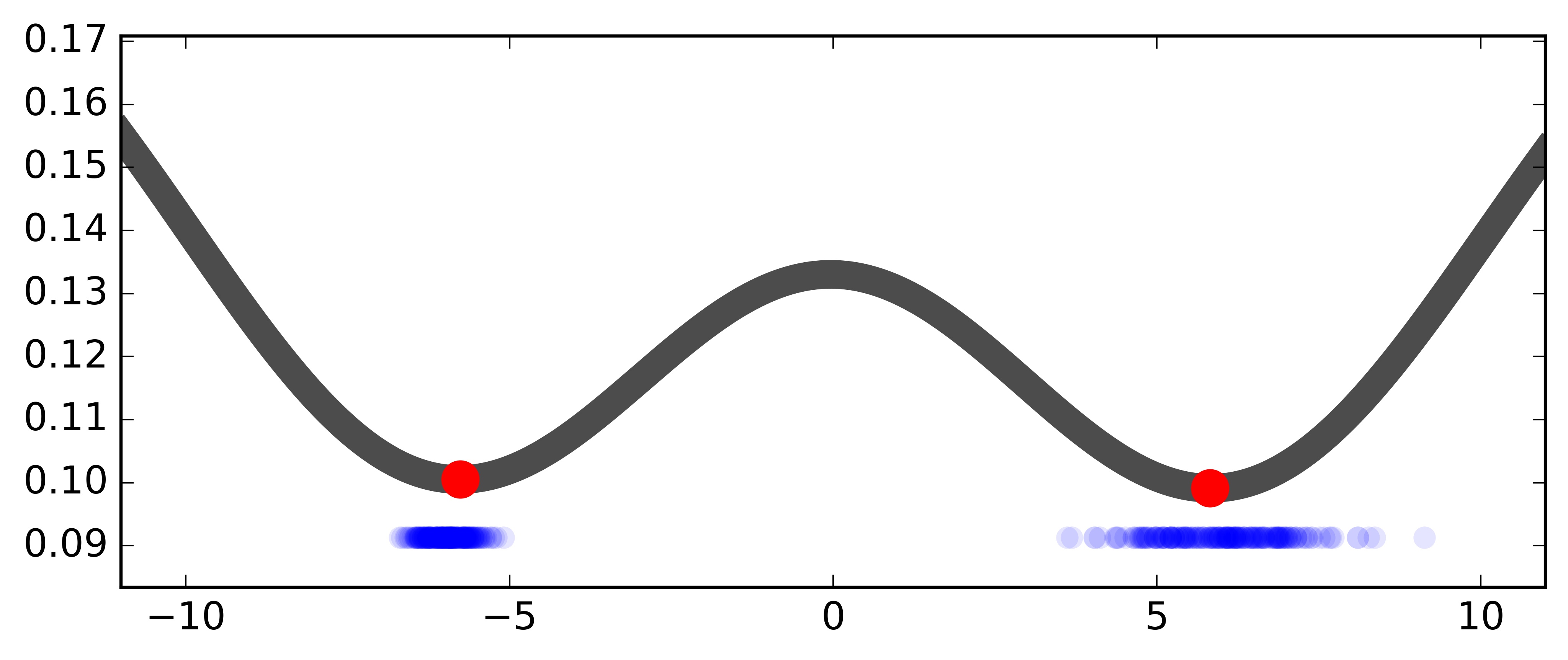} 
   \quad & \quad
   \includegraphics[width=0.4\linewidth]{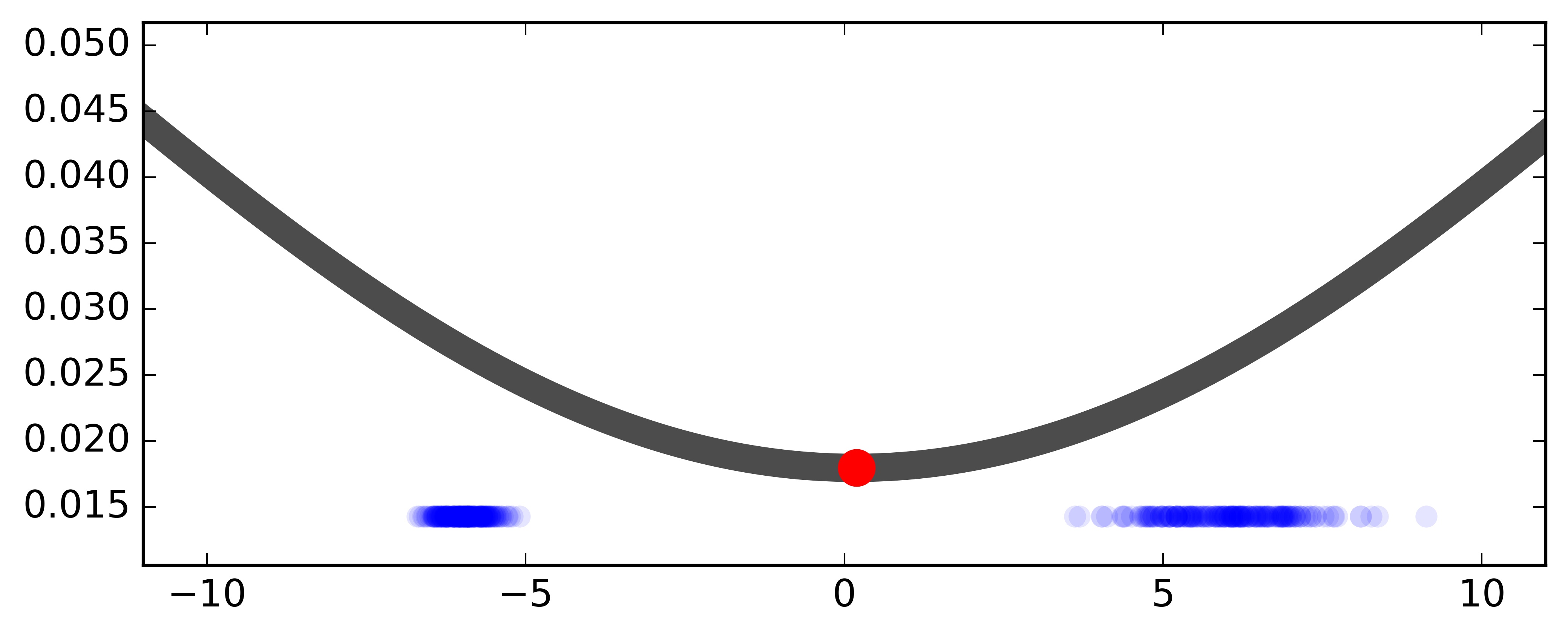} \\
   $t = 4$ \quad & \quad $t= 20$ \\
\end{tabular}
\end{center}
\caption{Evolution of the diffusion Fr\'{e}chet function across
scales.}
\label{F:evolution}
\end{figure}
At scale $t=0.1$, the Fr\'{e}chet function has two well defined
``valleys'', each corresponding to a data cluster. At smaller scales,
$V_{\alpha_n,t}$ captures finer properties of the organization
of the data, whereas
at larger scales $V_{\alpha_n,t}$ essentially views the data as
comprising a single cluster. For probability distributions on the real line,
the Fr\'{e}chet function levels off to $1/\sqrt{2 \pi t}$, as $|x| \to \infty$.
The local minima of $V_{\alpha,t}$ provide a generalization of the
mean of the distribution and a summary of the organization of the data
into sub-communities across multiple scales. Elucidating such
organization in data in an easily interpretable manner is one of our
principal aims.  
\end{example}

\begin{example}
In this example, we consider the dataset formed by $n = 1000$
points in $\real^2$, shown on the first panel of Figure\,\ref{F:frechet2d}.
The other panels show the diffusion Fr\'{e}chet functions for the
corresponding empirical measure $\alpha_n$, calculated at increasing
scales, and the gradient field $-\nabla V_{\alpha_n}$ at $t=2$,
whose behavior reflects the  organization of the data at that scale. 
\begin{figure}[h!]
\begin{center}
\begin{tabular}{cc}
\begin{tabular}{ccc}
\begin{tabular}{c}
\includegraphics[width=0.14\linewidth]{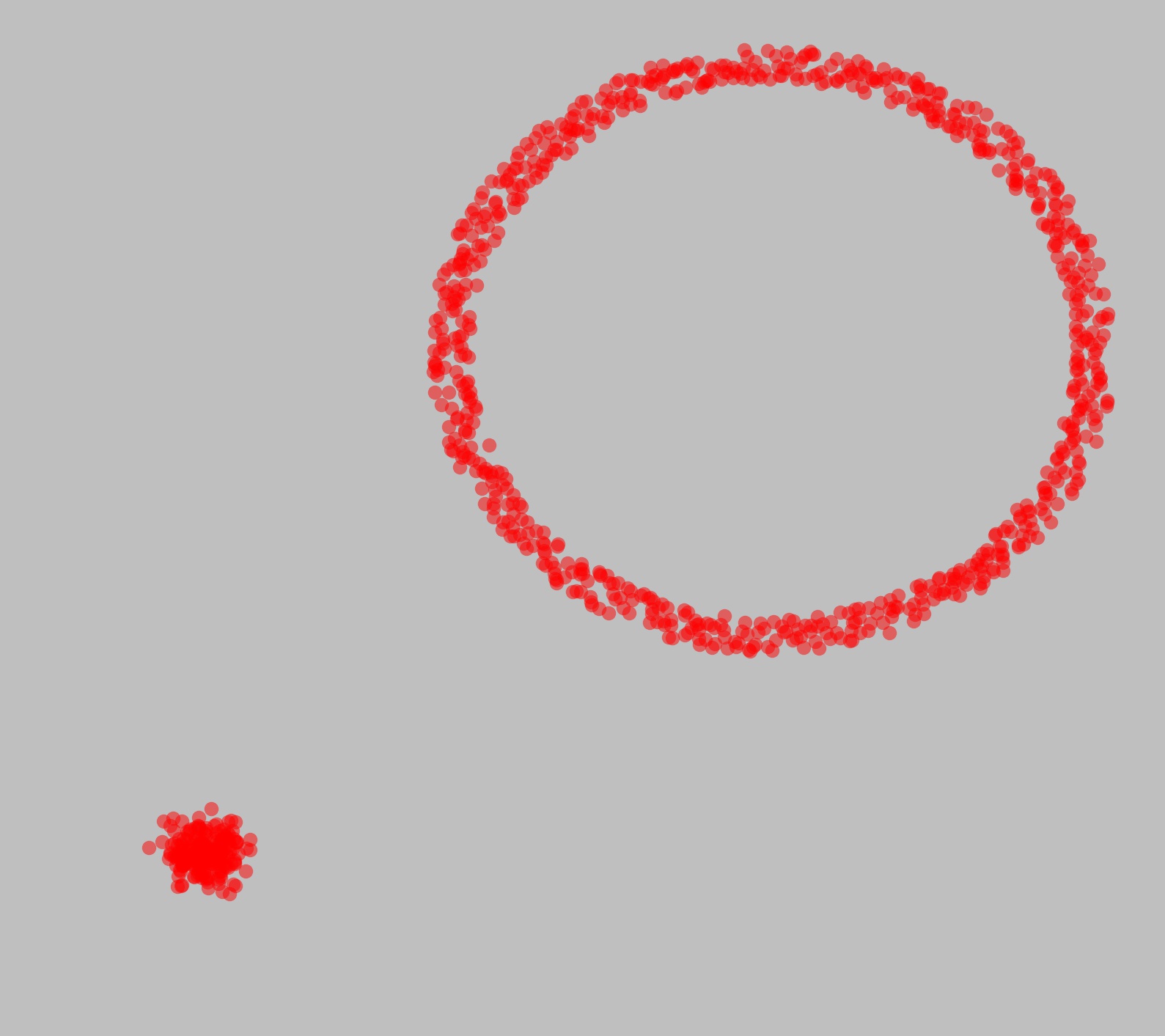}
\end{tabular} & 
\begin{tabular}{c}
\includegraphics[width=0.14\linewidth]{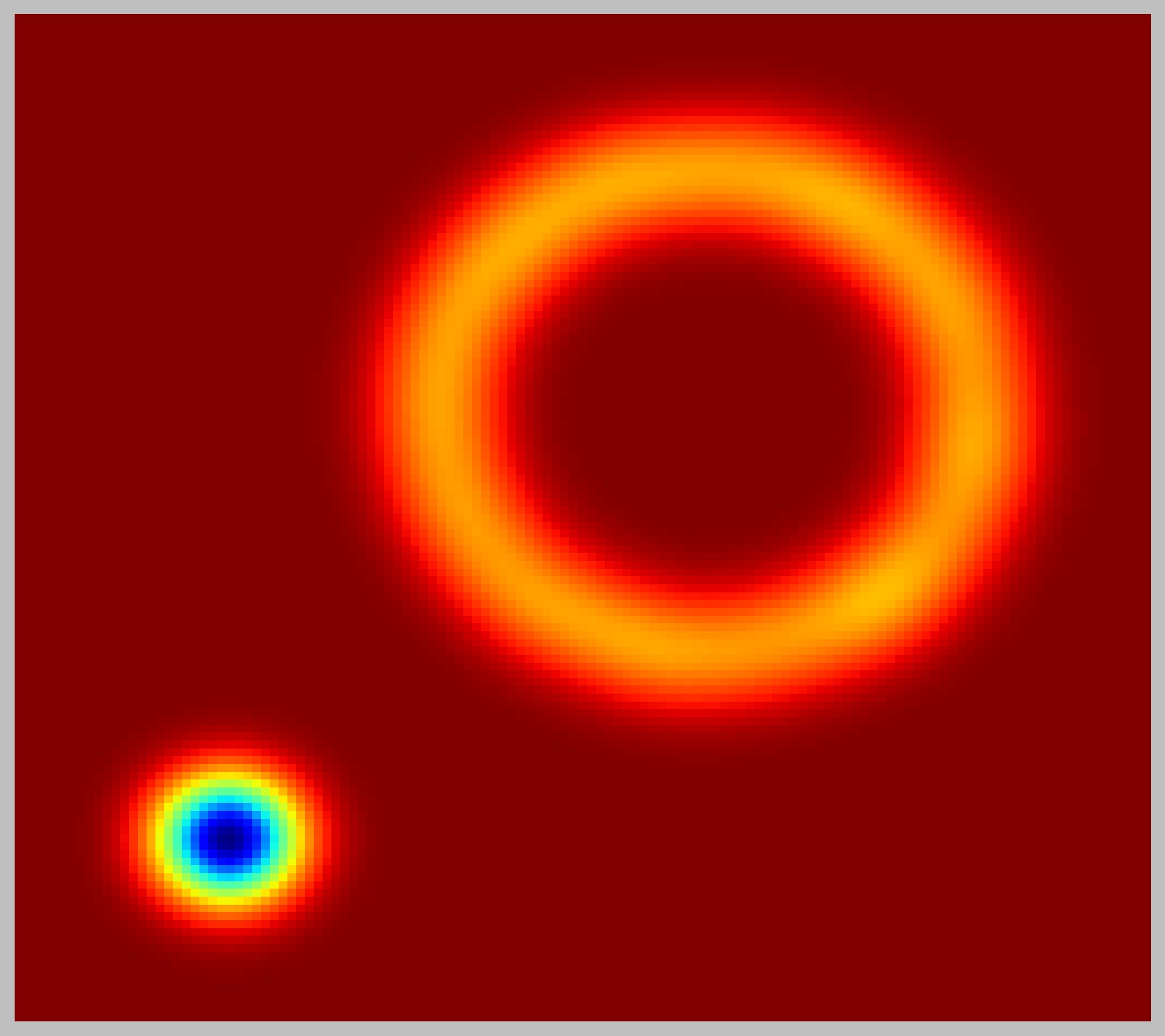} 
\end{tabular} & 
\begin{tabular}{c}
\includegraphics[width=0.14\linewidth]{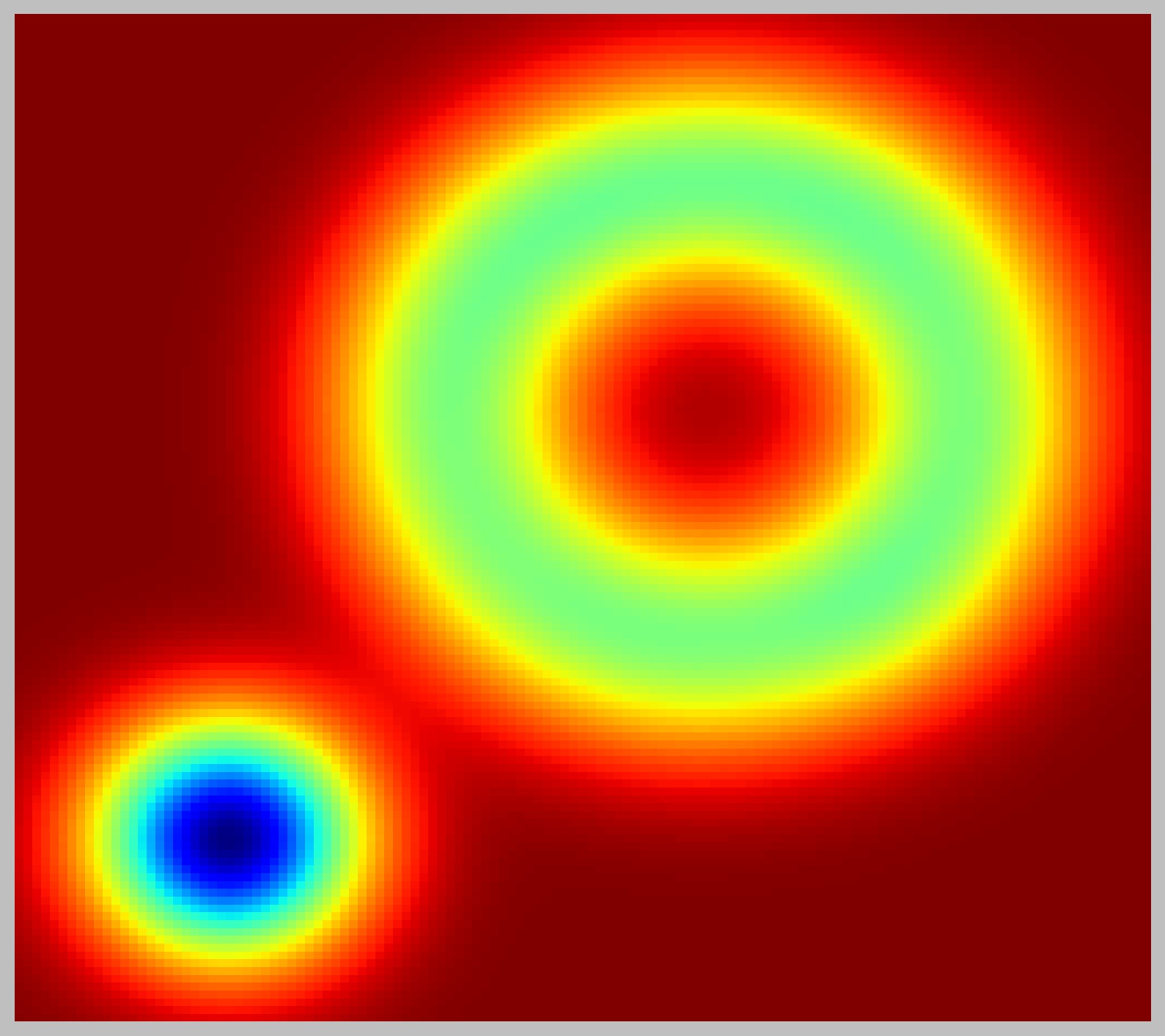} 
\end{tabular} \\
data & $t = 0.5$ & $t = 2$
\vspace{0.1in} \\
\begin{tabular}{c}
\includegraphics[width=0.14\linewidth]{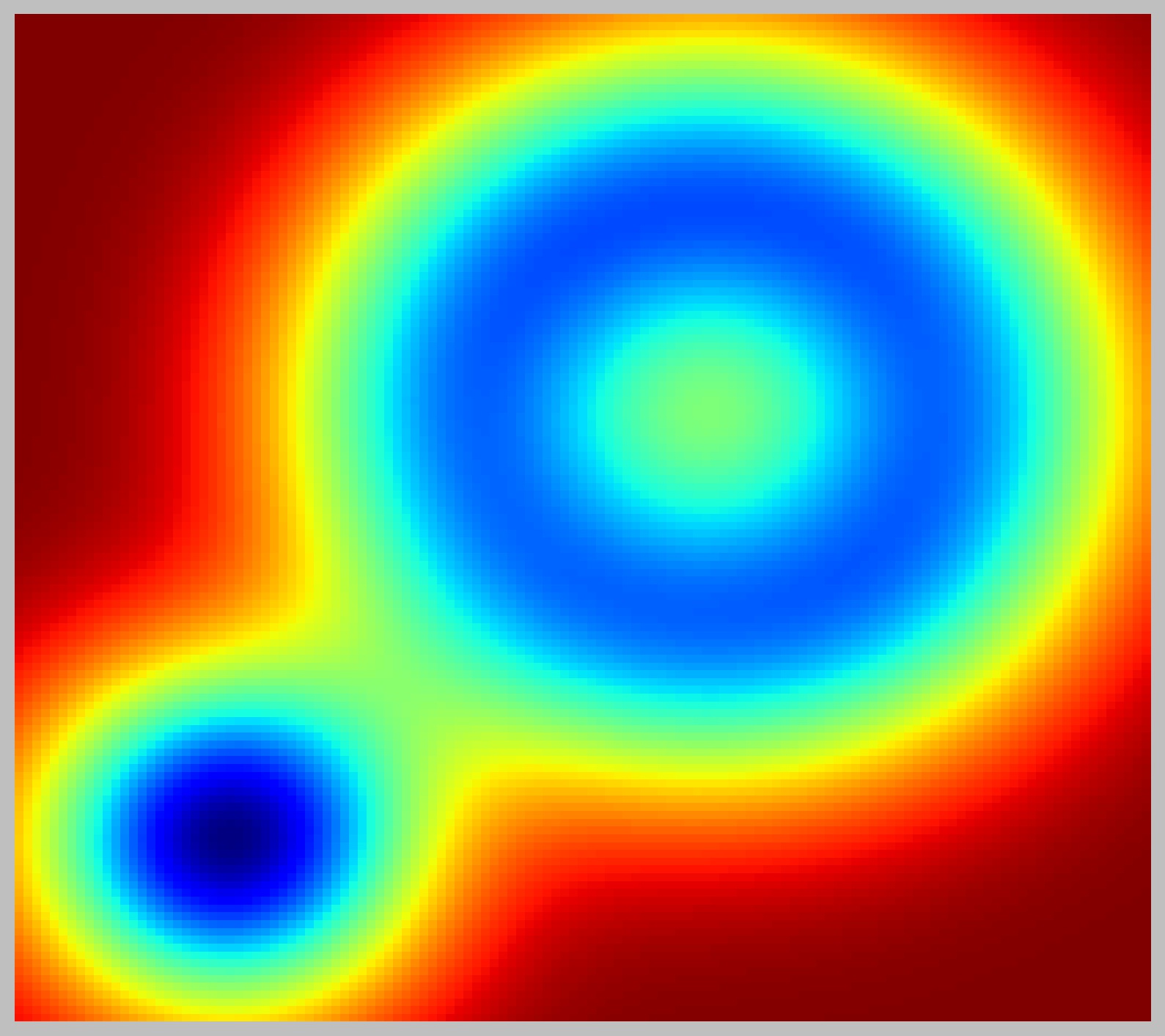} 
\end{tabular} & 
\begin{tabular}{c}
\includegraphics[width=0.14\linewidth]{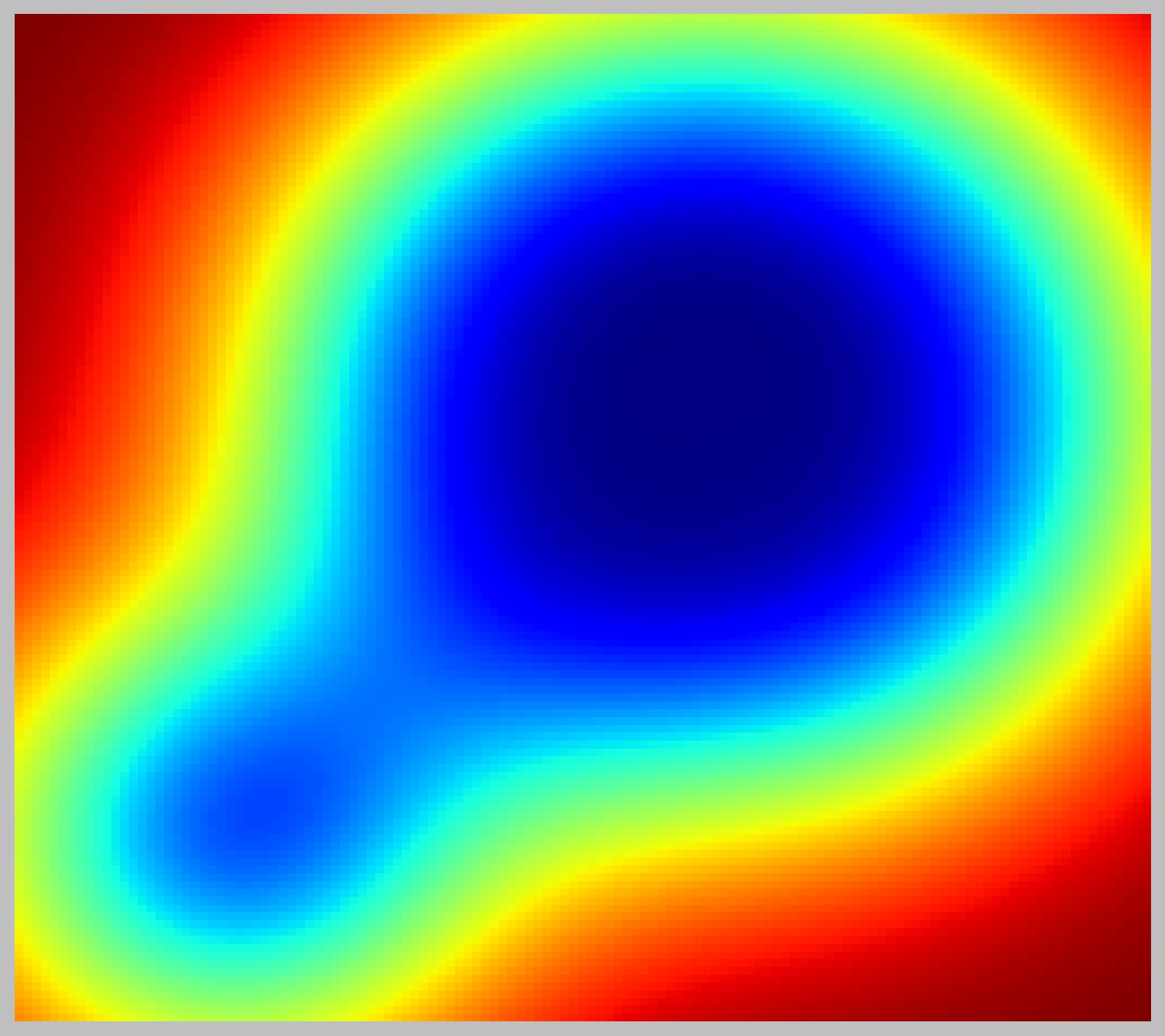} 
\end{tabular} & 
\begin{tabular}{c}
\includegraphics[width=0.14\linewidth]{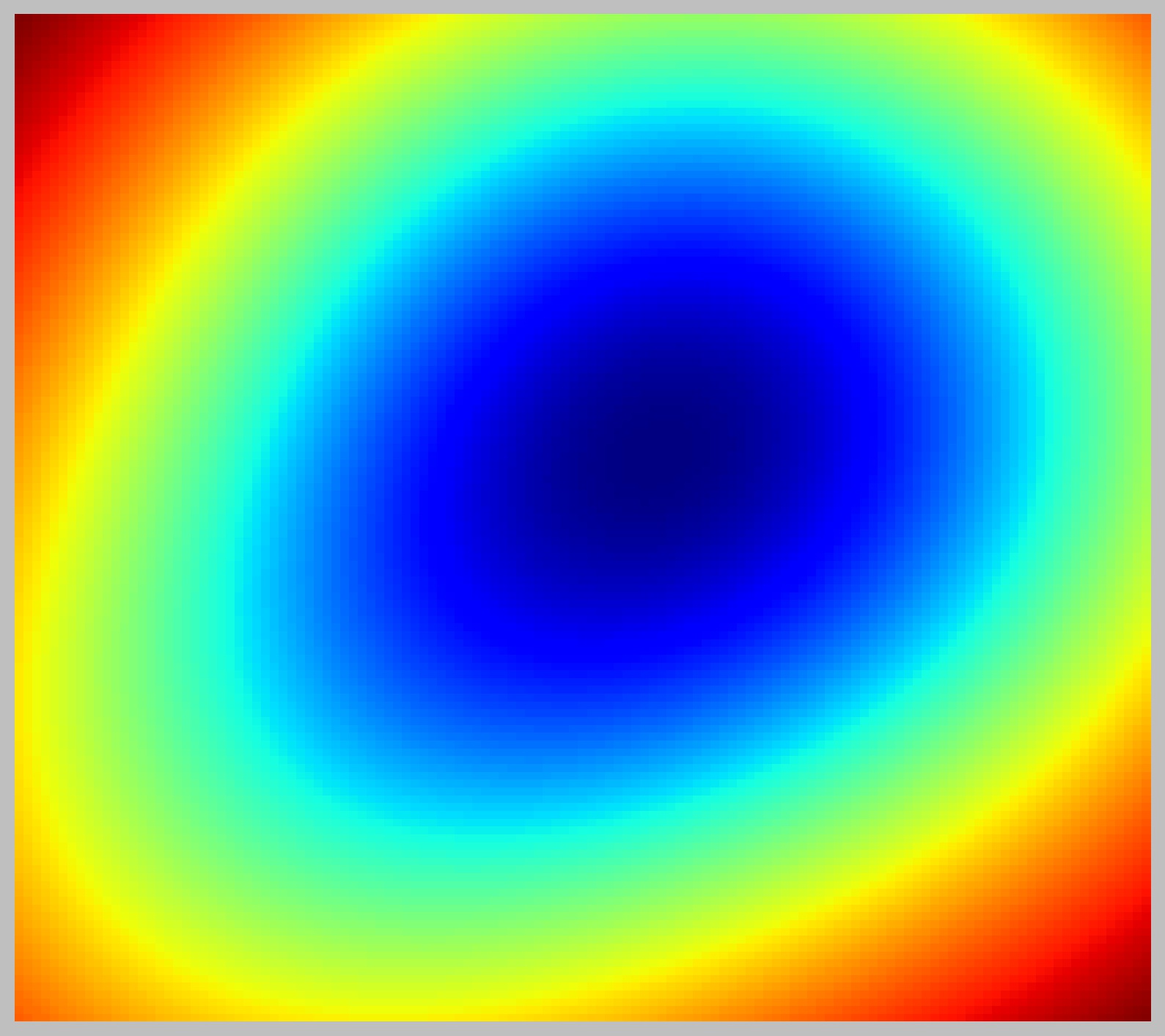} 
\end{tabular} \\
$t = 4.5$ & $t = 9$ &
$t = 30$
\end{tabular}
&
\begin{tabular}{c}
\includegraphics[width=0.33\linewidth]{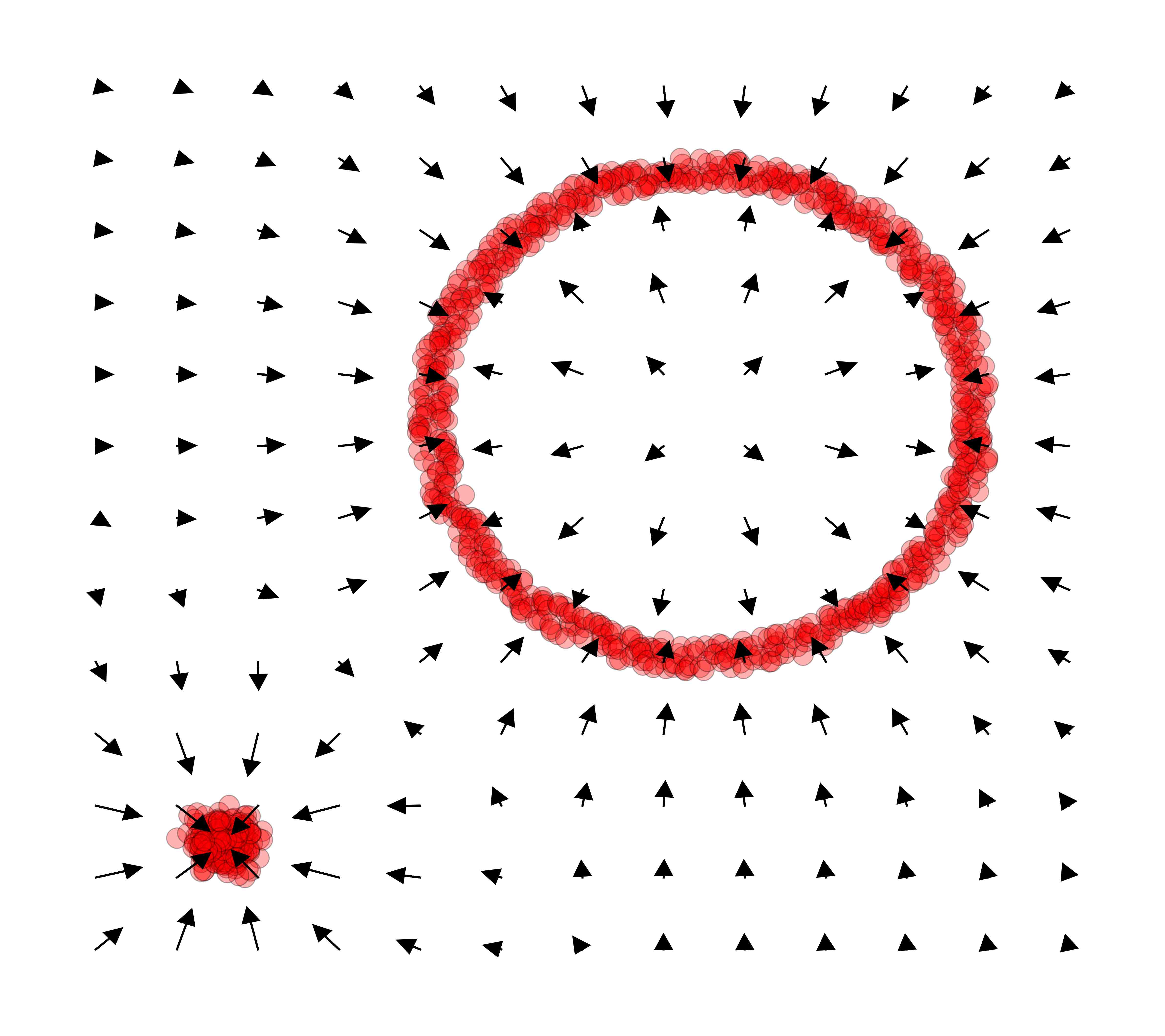}
\\
$t=2$
\end{tabular}
\end{tabular}
\end{center}
\caption{Data points in $\real^2$, heat maps of their diffusion
Fr\'{e}chet functions at increasing scales, and the gradient field
at $t=2$.}
\label{F:frechet2d}
\end{figure}
\end{example}

\begin{example}
This example illustrates the evolution of a dataset consisting of
$n=3158$ points under the  (negative) gradient flow of the Fr\'{e}chet
function of the associated empirical measure at scale $t=0.2$.
Panel (a) in Figure \ref{F:flow} shows the original data and the
other panels show various stages of the evolution towards the
attractors of the system.
\begin{figure}[h!]
\begin{center}
\begin{tabular}{ccc}
   \includegraphics[width=0.25\linewidth]{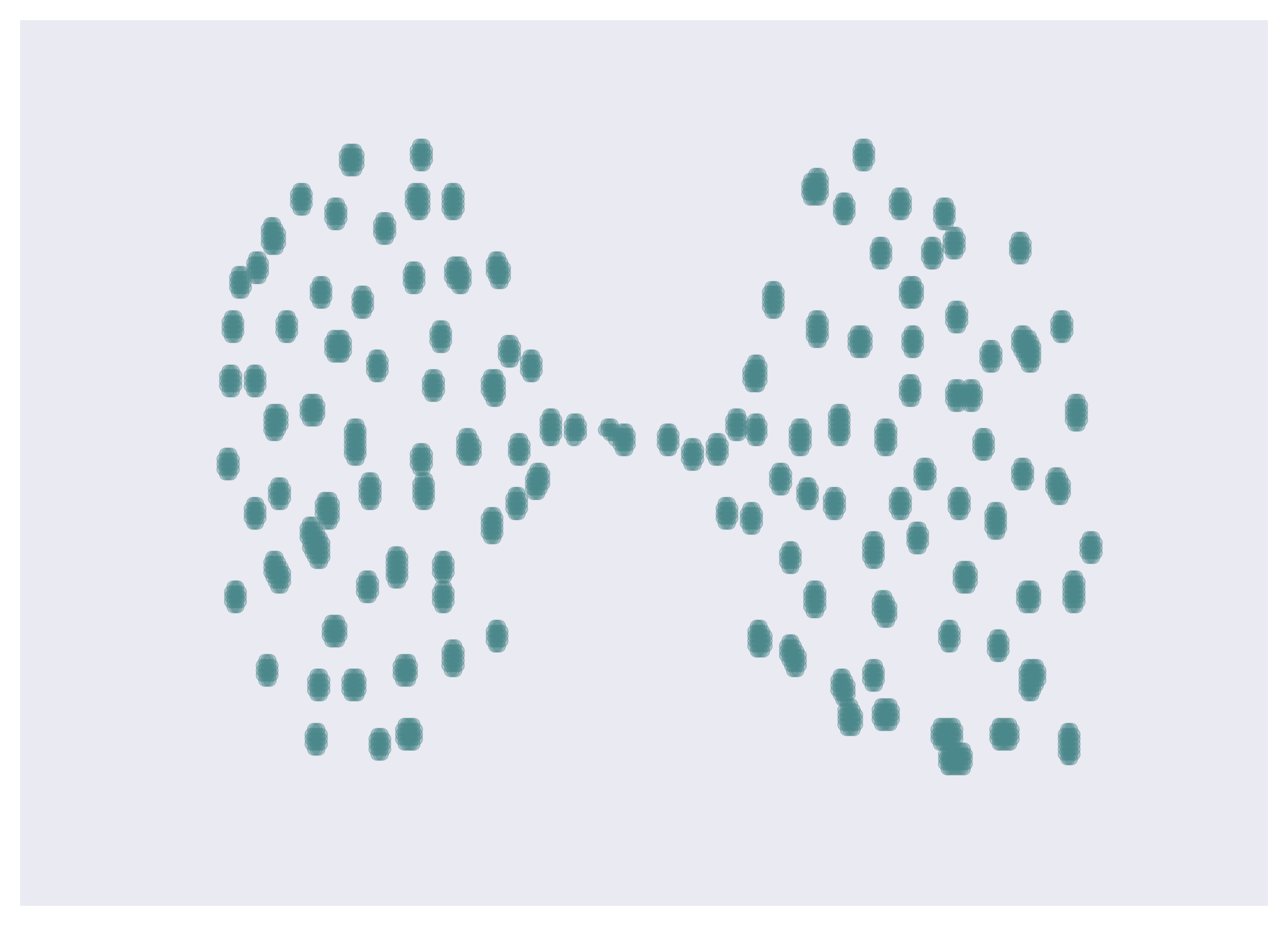} 
   \quad & \quad
   \includegraphics[width=0.25\linewidth]{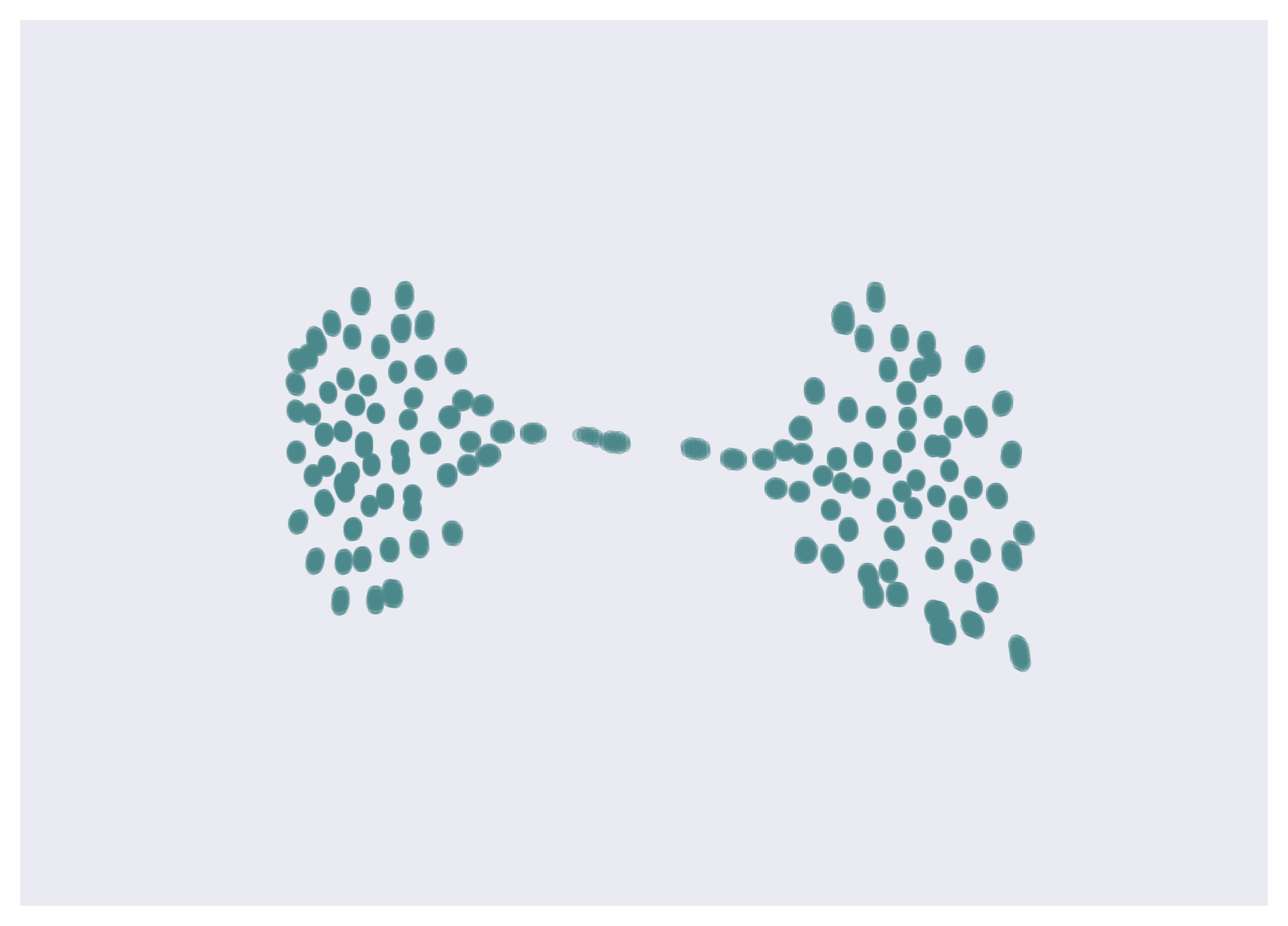} 
   \quad & \quad
   \includegraphics[width=0.25\linewidth]{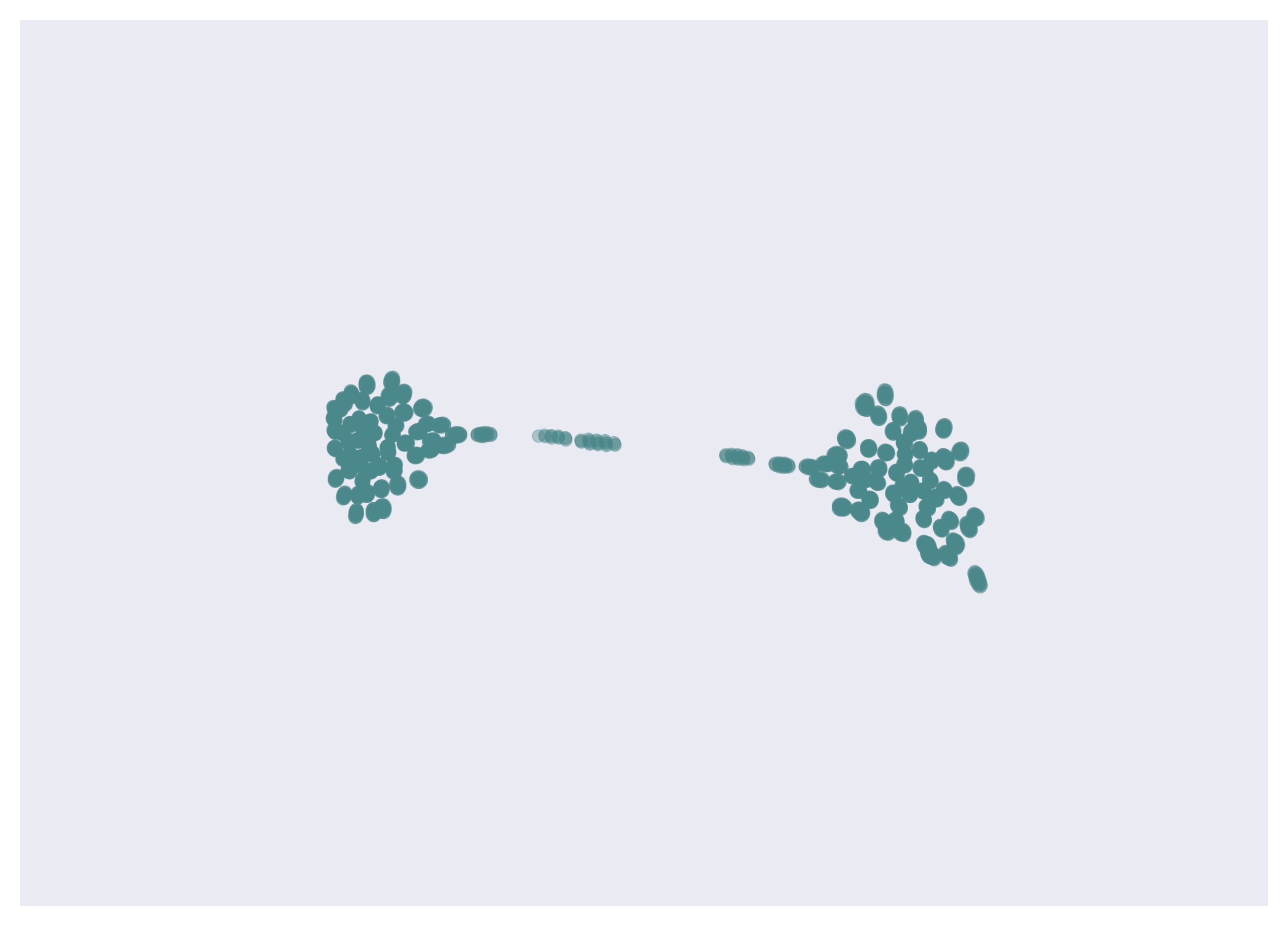} \\
   (a) \quad & \quad (b) \quad & \quad (c) \\
   \includegraphics[width=0.25\linewidth]{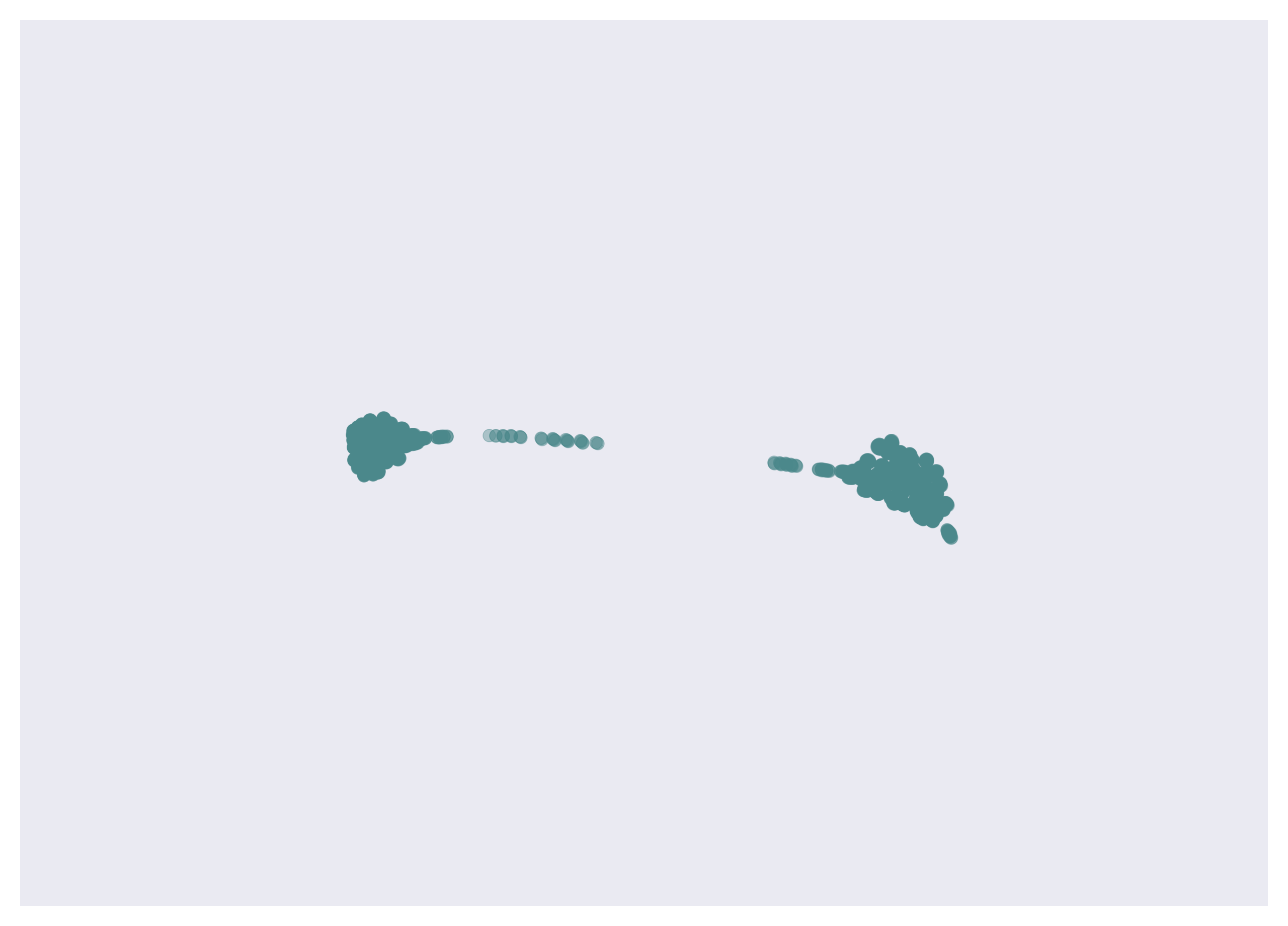} 
   \quad & \quad
   \includegraphics[width=0.25\linewidth]{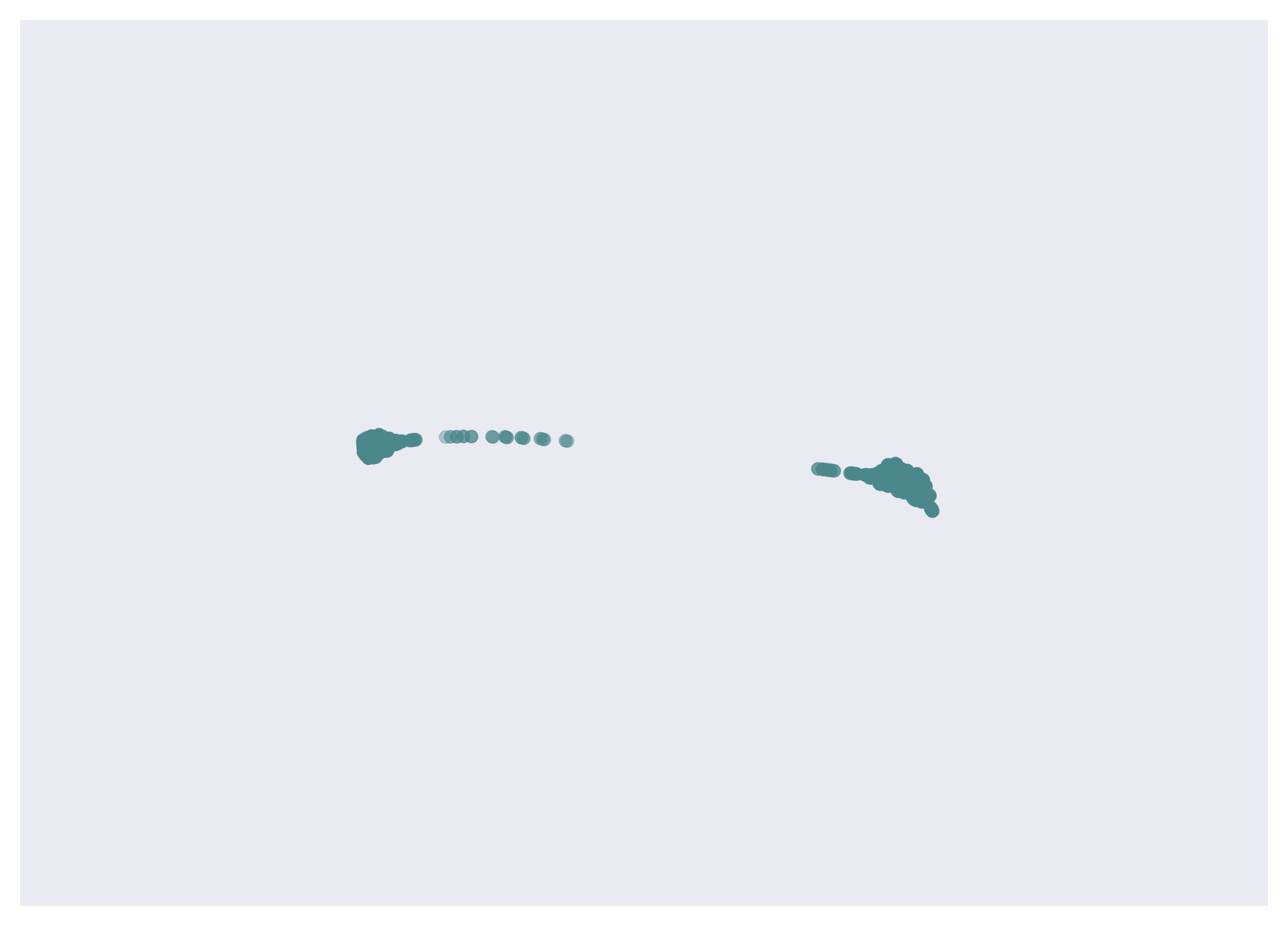} 
   \quad & \quad
   \includegraphics[width=0.25\linewidth]{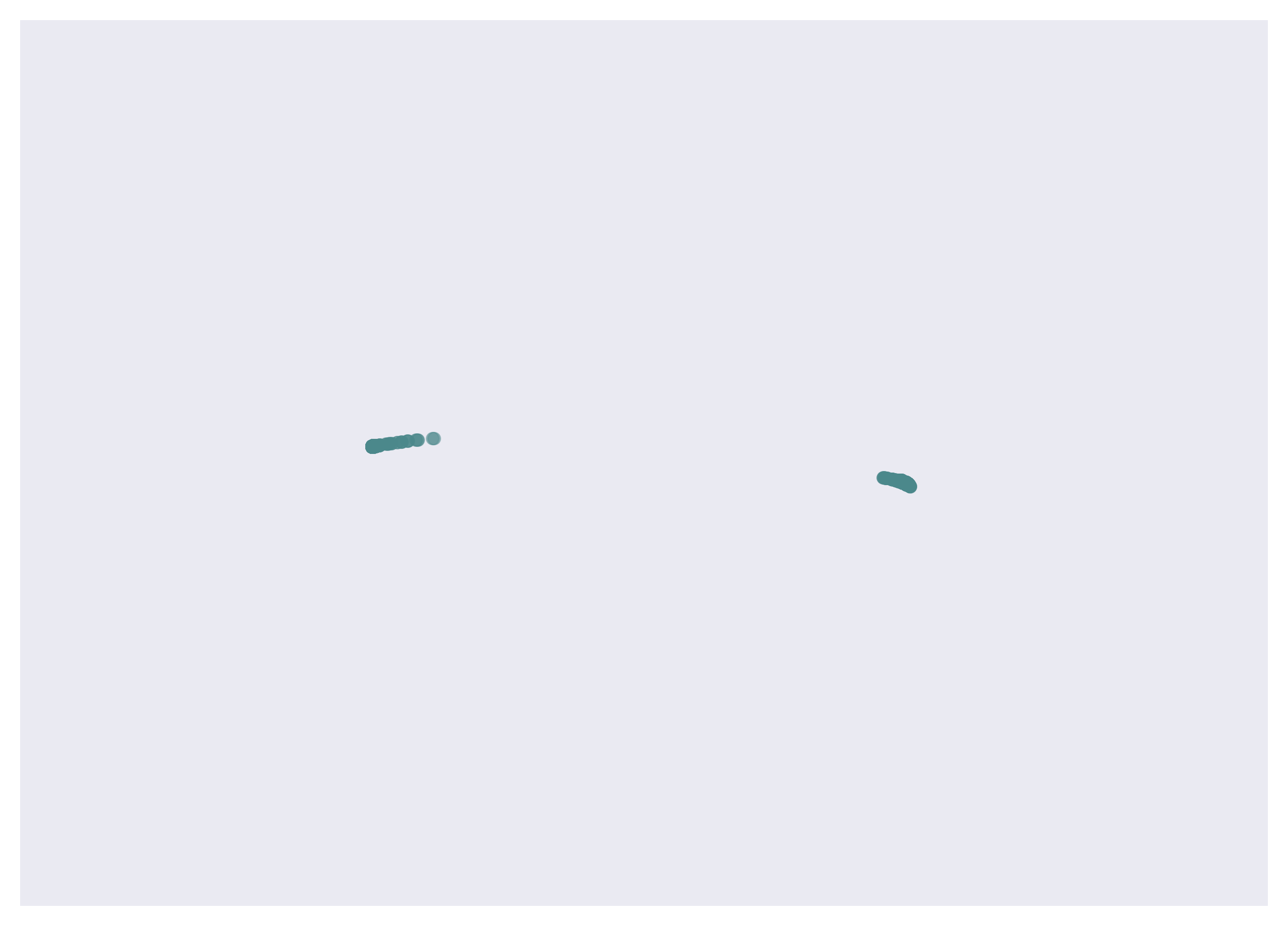} \\
   (d) \quad & \quad (e) \quad & \quad (f) \\
\end{tabular}
\end{center}
\caption{Panel (a) shows the original data and panels (b)--(f) display
various stages of the evolution of the data towards the attractors
of the gradient flow of the Fr\'{e}chet function at scale $t=0.2$.}
\label{F:flow}
\end{figure}
\end{example}

We now prove stability results for $V_{\alpha,t}$ and its gradient
field $\nabla V_{\alpha,t}$, which provide a basis for their use as
robust functional statistics in data analysis. We begin by reviewing
the definition of the Wasserstein distance between two probability
measures.

A {\em coupling\/} between two probability measures $\alpha$
and $\beta$ is a probability measure $\mu$ on $\real^d \times \real^d$
such that $(p_1)_\ast (\mu) = \alpha$ and $(p_2)_\ast (\mu) = \beta$.
Here, $p_1$ and $p_2$ denote the projections onto the first and second
coordinates, respectively. The set of all such
couplings is denoted $\Gamma (\alpha, \beta)$.

For each $p \in [1, \infty)$, let $\borel_p (\real^d)$ denote the collection
of all Borel probability measures $\alpha$ on $\real^d$ whose
$p$th moment $M_p (\alpha) = \int \|y\|^p \, d\alpha (y)$ is finite.

\begin{definition}[cf.\cite{villani09}] \label{D:wass}
Let $\alpha, \beta \in \borel_p (\real^d)$.
The $p$-Wasserstein distance $W_p (\alpha, \beta)$ is defined as
\begin{equation} \label{E:wass}
W_p (\alpha, \beta) = \left(\inf_\mu \iint\limits_{\real^d \times \real^d}
\|x-y\|^p d\mu(x,y)\right)^{1/p} \,,
\end{equation}
where the infimum is taken over all $\mu \in \Gamma (\alpha, \beta)$.
\end{definition}
It is well-known that the infimum in \eqref{E:wass} is realized
by some $\mu \in \Gamma (\alpha, \beta)$ \cite{villani09}. 
Moreover, if $p > q \geq 1$, then
$\borel_p (\real^d) \subset \borel_q (\real^d)$ and
$W_q (\alpha, \beta) \leq W_p (\alpha, \beta)$.

The following lemma will be useful in the proof of the stability results. 
Part (a) of the lemma is a special case of Lemma 1 of \cite{diaz1}. Let
$K_t \colon \real^d \to \real$ be the heat kernel centered at zero.
In the notation used in \eqref{E:gauss}, $K_t (y) = G_t (0,y)$.

\begin{lemma} \label{L:gauss}
Let  $\displaystyle C_d(t) = (4\pi t)^{d/2}$. For any $y_1, y_2 \in \real^d$
and $t>0$, we have:
\begin{itemize}
\item[(a)]
$\displaystyle \left| K_t (y_1) - K_t (y_2) \right| \leq
\frac{\|y_1- y_2\|}{C_d(t)\sqrt{2t\cdot e}}$ ;
\item[(b)]
$\displaystyle \left \Vert y_1 K_t (y_1) - y_2 K_t(y_2)  \right \Vert \leq
\frac{e+2}{e C_d (t)} \|y_1- y_2\|$.
\end{itemize}
\end{lemma}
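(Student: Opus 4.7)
The plan is to obtain both bounds from the mean value inequality, applied to $K_t$ in part (a) and to the vector-valued map $h(y) = yK_t(y)$ in part (b). In each case the whole problem reduces, after a direct gradient computation, to maximizing a scalar function of $r = \|y\|$ on $[0, \infty)$.

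For part (a), I would differentiate $K_t(y) = C_d(t)^{-1} \exp(-\|y\|^2/(4t))$ to get $\nabla K_t(y) = -\frac{y}{2t} K_t(y)$, hence
\begin{equation*}
\|\nabla K_t(y)\| \;=\; \frac{\|y\|}{2t\, C_d(t)} \exp\!\bigl(-\|y\|^2/(4t)\bigr).
\end{equation*}
Setting $r = \|y\|$, the function $r \mapsto (r/2t) \exp(-r^2/(4t))$ has a unique critical point on $[0,\infty)$ at $r = \sqrt{2t}$, where its value equals $(2te)^{-1/2}$. Thus $\sup_y \|\nabla K_t(y)\| = 1/(C_d(t)\sqrt{2t\,e})$, and the mean value inequality gives (a). (This is the content of Lemma 1 of \cite{diaz1}.)

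For part (b), I would apply the vector-valued mean value inequality $\|h(y_1) - h(y_2)\| \leq \bigl(\sup_y \|Dh(y)\|_{op}\bigr)\,\|y_1 - y_2\|$. A straightforward computation gives
\begin{equation*}
Dh(y) \;=\; K_t(y)\Bigl(I - \tfrac{1}{2t}\, y y^{T}\Bigr),
\end{equation*}
so by the triangle inequality for the operator norm and $\|yy^T\|_{op} = \|y\|^2$,
\begin{equation*}
\|Dh(y)\|_{op} \;\leq\; K_t(y) \;+\; K_t(y)\,\frac{\|y\|^2}{2t}.
\end{equation*}
I would then bound the two summands separately by their global maxima. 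Clearly $K_t(y) \leq K_t(0) = 1/C_d(t)$. For the second summand, setting $u = \|y\|^2/(4t)$ gives $K_t(y)\|y\|^2/(2t) = (2u/C_d(t))\, e^{-u}$, whose unique maximum on $[0,\infty)$ occurs at $u = 1$ with value $2/(e\, C_d(t))$. Summing, $\sup_y \|Dh(y)\|_{op} \leq (1 + 2/e)/C_d(t) = (e+2)/(e\,C_d(t))$, and the mean value inequality finishes (b).

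The only real subtlety is that the two suprema are attained at different points ($y = 0$ and $\|y\|^2 = 4t$, respectively), so the term-by-term maximization is not tight; however, this simple split already yields the desired constant $(e+2)/e$. The rest is routine one-variable calculus.
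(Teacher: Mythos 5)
Your proposal is correct and follows essentially the same route as the paper: the paper also integrates the derivative of $K_t$ (resp.\ $y\,K_t(y)$) along the segment from $y_2$ to $y_1$ --- which is exactly your mean value inequality --- and bounds the resulting expression term by term using the same scalar maxima, namely $\|y\|e^{-\|y\|^2/4t}\leq\sqrt{2t/e}$ for (a) and $K_t(y)\leq 1/C_d(t)$ together with $\|y\|^2K_t(y)\leq 4t/(e\,C_d(t))$ for (b). Your explicit Jacobian formulation $Dh(y)=K_t(y)\bigl(I-\tfrac{1}{2t}yy^{T}\bigr)$ is just a repackaging of the paper's directional-derivative computation, and your observation that the term-by-term maximization is not tight applies equally to both arguments.
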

\begin{proof}
(a) For each $\xi \in[0,1]$, let $y(\xi) = \xi y_1 +(1-\xi) y_2$. Then,
\begin{equation}
\begin{split}
\| K_t (y_1) - K_t (y_2) \| &=
\left| \int_0^1 \frac{d}{d\xi} K_t(y(\xi))\,d\xi \right|
\leq \int_0^1 \left| \frac{d}{d\xi} K_t(y(\xi)) \right| \, d\xi \\
&= \int_0^1 \left| \nabla K_t(y(\xi))\cdot (y_1- y_2) \right| \, d\xi \\
&\leq \|y_1- y_2\| \int_0^1 \left\| \nabla K_t(y(\xi)) \right\| \, d\xi \,.
\end{split}
\end{equation}
Note that
\begin{equation}
\| \nabla K_t(y) \| = \|\frac{y}{2t} K_t(y) \| \leq
\frac{\|y\|}{2tC_d (t)} \exp\left(-\frac{\Vert y\Vert^2}{4t}\right)
\leq \frac{1}{C_d (t) \sqrt{2t\cdot e}} \,.
\end{equation}
In the last inequality we used the fact that
$\|y\| \exp\left(-\frac{\Vert y\Vert^2}{4t}\right) \leq \sqrt{\frac{2t}{e}}$.
Hence,
\begin{equation*}
\begin{split}
\left| K_t (y_1) - K_t (y_2) \right| 
&\leq  \frac{\|y_1- y_2\|}{C_d(t) \sqrt{2t\cdot e}} \,.
\end{split}
\end{equation*}

(b) Using the same notation as in (a), 
\begin{equation}
\frac{d}{d\xi} \left[y K_t (y) \right] = 
K_t (y) (y_1-y_2) -  y \frac{K_t (y)}{2t} \, y \cdot (y_2-y_2) \,.
\end{equation}
Hence, 
\begin{equation} \label{E:diff}
\begin{split}
\left\| \frac{d}{d\xi} \left[y K_t (y) \right] \right\| &\leq
\left(K_t (y) + \frac{\|y\|^2}{2t} K_t (y) \right) \|y_2-y_2\| \\
&\leq \frac{1}{C_d (t)} \left(1 + \frac{2}{e}\right) \|y_1-y_2\| \,.
\end{split}
\end{equation}
In the last inequality we used the facts that $K_t (y) 
\leq 1/ C_d (t)$ and $\|y\|^2 K_t (y) \leq 4t/e C_d (t)$.
Writing
\begin{equation} \label{E:diff1}
y_1 K_t (y_1) - y_2 K_t (y_2) = \int_0^1
\frac{d}{d\xi} \left[y K_t (y) \right] \, d\xi \,,
\end{equation}
it follows from \eqref{E:diff} and \eqref{E:diff1} that
\begin{equation}
\|y_1 K_t (y_1) - y_2 K_t (y_2)\| \leq
\frac{1}{C_d (t)} \left(1 + \frac{2}{e}\right) \|y_1-y_2\| \,,
\end{equation}
as claimed.
\end{proof}

\begin{theorem}[Stability of Fr\'{e}chet functions] \label{T:stability1}
Let $\alpha$ and $\beta$ be Borel probability measures on $\real^d$
with diffusion Fr\'{e}chet functions $V_{\alpha,t}$ and $V_{\beta,t}$,
$t>0$, respectively. If $\alpha, \beta \in  \borel_1 (\real^d)$,
then
\[
\left\| V_{\alpha,t} - V_{\beta,t} \right\|_\infty \leq 
\frac{1}{C_d (2t)\sqrt{t\cdot e}}\, W_1 (\alpha,\beta) \,.
\]
\end{theorem}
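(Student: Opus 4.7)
The plan is to reduce the supremum bound to a pointwise estimate $|V_{\alpha,t}(x) - V_{\beta,t}(x)| \leq W_1(\alpha,\beta)/(C_d(2t)\sqrt{t\cdot e})$ that holds for every $x \in \real^d$, and then take the supremum over $x$. The key representation is formula \eqref{E:frechet3} with $t$ replaced by $2t$, which converts the squared diffusion distance into a heat-kernel convolution:
\[
V_{\alpha,t}(x) = \frac{2}{C_d(2t)} - 2 \int_{\real^d} G_{2t}(x,y)\,d\alpha(y).
\]
The constant term cancels in the difference $V_{\alpha,t}(x) - V_{\beta,t}(x)$, leaving only a Gaussian-weighted quantity to control. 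This is the right starting point because the Wasserstein distance interacts cleanly with integrals of Lipschitz functions, and the heat kernel is Lipschitz with a constant already computed in Lemma~\ref{L:gauss}(a).

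Next, I would introduce an optimal coupling $\mu \in \Gamma(\alpha,\beta)$ realizing $W_1(\alpha,\beta)$, whose existence is granted after Definition~\ref{D:wass}. Using the marginal conditions to rewrite each integral as an integral over $\real^d \times \real^d$ against $\mu$, the difference becomes
\[
V_{\alpha,t}(x) - V_{\beta,t}(x) = 2 \iint_{\real^d\times\real^d}\bigl(G_{2t}(x,y_2) - G_{2t}(x,y_1)\bigr)\,d\mu(y_1,y_2).
\]
Since $G_{2t}(x,y) = K_{2t}(y-x)$, a translate of the centered kernel, applying Lemma~\ref{L:gauss}(a) with parameter $2t$ yields $|G_{2t}(x,y_1) - G_{2t}(x,y_2)| \leq \|y_1-y_2\|/\bigl(2\, C_d(2t)\sqrt{t\cdot e}\bigr)$. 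Inserting this estimate under the integral and using $\iint \|y_1 - y_2\|\,d\mu(y_1,y_2) = W_1(\alpha,\beta)$ produces the claimed pointwise bound; the factor $2$ in front of the integral is absorbed by the $2$ in the denominator of the Lipschitz constant, which is exactly why the theorem's constant comes out as $1/(C_d(2t)\sqrt{t\cdot e})$ rather than twice that.

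There is no serious obstacle here; the proof is essentially a three-line computation once the representation via \eqref{E:frechet3} is in place. The only delicate point is bookkeeping the factor of two in the heat-kernel parameter — the diffusion distance at scale $t$ is governed by the kernel at scale $2t$ — so that the constant matches $C_d(2t)\sqrt{t\cdot e}$ rather than some neighboring combination. It is also worth noting, either in passing or as a remark after the proof, that while $V_{\alpha,t}$ is defined for every Borel probability measure thanks to the finite diameter of $(\real^d,d_t)$, the hypothesis $\alpha,\beta \in \borel_1(\real^d)$ is needed solely to make $W_1(\alpha,\beta)$ finite and hence the estimate meaningful.
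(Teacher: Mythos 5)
Your proposal is correct and follows essentially the same route as the paper's own proof: rewrite $d_t^2$ via \eqref{E:difdistance} (equivalently \eqref{E:frechet3} at scale $2t$), pass to an optimal coupling realizing $W_1(\alpha,\beta)$, and apply Lemma~\ref{L:gauss}(a) with parameter $2t$, with the factor of $2$ absorbed exactly as you describe. Your observation that $G_{2t}(x,\cdot)$ is a translate of $K_{2t}$ is the same reduction the paper performs by translating so that $x=0$.
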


\begin{proof}
Fix $t>0$ and $x \in \real^d$. Let $\mu \in \Gamma(\alpha,\beta)$
be a coupling such that
\begin{equation}
\displaystyle\iint\limits_{\real^d \times
\real^d}\|z_1 - z_2\| d \mu(z_1, z_2) = W_1 (\alpha,\beta) \,.
\end{equation} Then,
we may write
\begin{equation} \label{E:simplified1}
V_{\alpha,t} (x)=\iint\limits_{\real^d \times \real^d}
d^2_t(z_1,x)\, d\mu(z_1, z_2)
\end{equation}
and
\begin{equation} \label{E:simplified2}
V_{\beta,t} (x)=\iint\limits_{\real^d \times \real^d}
d^2_t(z_2,x)\, d\mu(z_1, z_2).
\end{equation}
By \eqref{E:difdistance},
$d^2_t(z,x)=2/C_d (2t) - 2 G_{2t}(z,x)$. Hence,
\begin{equation} \label{E:simplified3}
V_{\alpha,t} (x) - V_{\beta,t} (x) = -2 \iint\limits_{\real^d \times \real^d}
\left(G_{2t} (z_1,x) - G_{2t} (z_2,x)\right) d\mu(z_1, z_2) \,,
\end{equation}
which implies that
\begin{equation} \label{E:simplified4}
\left \vert V_{\alpha,t} (x)- V_{\beta,t} (x) \right \vert \leq
2 \iint\limits_{\real^d \times \real^d}
\left\vert  G_{2t}(z_1,x) - G_{2t}(z_2,x) \right\vert d\mu(z_1, z_2) \,.
\end{equation}
After translating $\alpha$ and $\beta$, we may assume that $x=0$.
Thus, by Lemma \ref{L:gauss}(a),
\begin{equation} \label{E:simplified5}
\begin{split}
\left| V_{\alpha,t} (x) - V_{\beta,t} (x) \right|
&\leq  \frac{1}{C_d (2t)\sqrt{t\cdot e}}
\iint\limits_{\real^d \times \real^d} \left \Vert z_1-z_2 \right \Vert
d\mu(z_1, z_2) \\
&= \frac{1}{C_d (2t)\sqrt{t\cdot e}} \, W_1 (\alpha,\beta) \,,
\end{split}
\end{equation}
as claimed.
\end{proof}


\begin{theorem}[Stability of gradient fields] \label{T:stability1a}
Let $\alpha$ and $\beta$ be Borel probability measures on $\real^d$
with diffusion Fr\'{e}chet functions $V_{\alpha,t}$ and $V_{\beta,t}$,
$t>0$, respectively. If $\alpha, \beta \in  \borel_1 (\real^d)$,
then
\[
\sup_{x \in \real^d} \left\| \nabla V_{\alpha,t} (x) -
\nabla V_{\beta,t} (x) \right\| \leq 
\frac{e+2}{e C_d(2t)} \, W_1 (\alpha,\beta) \,.
\]
\end{theorem}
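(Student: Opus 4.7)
The plan is to follow the template established in the proof of Theorem \ref{T:stability1}, but substituting part (b) of Lemma \ref{L:gauss} for part (a). Part (b) is designed precisely for expressions of the form $yK_t(y)$, which is exactly what arises upon differentiating the heat kernel in its spatial variable, so the adaptation should be essentially mechanical once the gradient of $V_{\alpha,t}$ is computed explicitly.

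First I would apply the identity
\[
V_{\alpha,t}(x) = \frac{2}{C_d(2t)} - 2\int_{\real^d} G_{2t}(x,y)\,d\alpha(y)
\]
from \eqref{E:frechet3} (with $t$ replaced by $2t$) and differentiate under the integral sign, using $\nabla_x G_{2t}(x,y) = -\tfrac{x-y}{4t}\,G_{2t}(x,y)$. This yields an explicit vector-valued integral representation for $\nabla V_{\alpha,t}(x)$, and an analogous one for $\nabla V_{\beta,t}(x)$. Next I would reduce to the case $x=0$ by translation invariance: replacing $\alpha,\beta$ by their push-forwards under $y\mapsto y-x$ leaves both sides of the desired inequality unchanged, since neither the Euclidean diffusion kernel nor the $1$-Wasserstein distance is affected by translating both measures by the same vector. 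At $x=0$ the two gradients reduce to scalar multiples of $\int y\,K_{2t}(y)\,d\alpha(y)$ and $\int y\,K_{2t}(y)\,d\beta(y)$, respectively. Choosing an optimal coupling $\mu\in\Gamma(\alpha,\beta)$ realizing $W_1(\alpha,\beta)$ and rewriting the gradient difference as a single integral against $\mu$ produces an integrand of the form $z_1 K_{2t}(z_1) - z_2 K_{2t}(z_2)$. Pulling the Euclidean norm inside via the triangle inequality for Bochner integrals, applying Lemma \ref{L:gauss}(b) pointwise with parameter $2t$, and integrating against $\mu$ then yields the bound with the advertised dependence on $W_1(\alpha,\beta)$.

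The only genuine obstacle is justifying the differentiation under the integral sign. This is handled by dominated convergence, using the uniform bound $\|\nabla_x G_{2t}(x,y)\| \leq 1/(C_d(2t)\sqrt{4t\cdot e})$ obtained by exactly the gradient argument appearing in the proof of Lemma \ref{L:gauss}(a). Since this bound is independent of both $x$ and $y$, and $\alpha$ is a probability measure, the hypotheses of dominated convergence are trivially satisfied. Everything else is a direct transcription of the argument for Theorem \ref{T:stability1}, with the scalar estimate of part (a) replaced by the vector-valued estimate of part (b).
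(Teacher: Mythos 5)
Your proposal follows the paper's proof essentially step for step: the same representation of $V_{\alpha,t}-V_{\beta,t}$ via \eqref{E:frechet3} and \eqref{E:simplified3}, the same translation to $x=0$, the same optimal coupling $\mu\in\Gamma(\alpha,\beta)$, and the same application of Lemma \ref{L:gauss}(b) at parameter $2t$, with your justification of differentiation under the integral sign (via the uniform bound $\|\nabla K_{2t}\|\leq 1/(C_d(2t)\sqrt{4te})$) being a welcome detail the paper leaves implicit. One caveat you inherit from the paper by asserting the ``advertised'' constant without tracking it: since $\nabla_x G_{2t}(z,x)=-\tfrac{x-z}{4t}\,G_{2t}(z,x)$, this chain of inequalities actually produces $\tfrac{e+2}{2te\,C_d(2t)}\,W_1(\alpha,\beta)$, i.e.\ an extra factor of $\tfrac{1}{2t}$ that the paper silently drops in its final display, so the stated constant is only recovered from this argument when $t\geq 1/2$.
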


\begin{proof}
Let $\mu \in \Gamma(\alpha, \beta)$ be a coupling that
realizes $W_1 (\alpha, \beta)$.
From \eqref{E:simplified3}, it follows that
\begin{equation}
\begin{split}
\| \nabla V_{\alpha,t} (x) &- \nabla V_{\beta,t} (x) \|
\leq 2 \iint \left\| \nabla_x G_{2t}(z_1,x) - \nabla_x G_{2t}(z_2,x) \right\|
d\mu(z_1, z_2) \\
&\leq \frac{1}{2t} \iint\limits
\left\| (x-z_1) G_{2t}(z_1,x) - (x-z_2) G_{2t}(z_2,x) \right\|
d\mu(z_1, z_2) \,.
\end{split}
\end{equation}
We may assume that $x=0$, so we rewrite the previous
inequality as
\begin{equation}
\| \nabla V_{\alpha,t} (x) - \nabla V_{\beta,t} (x) \|
\leq \frac{1}{2t} \iint
\left\| z_1 K_{2t}(z_1) - z_2 K_{2t}(z_2) \right\|
d\mu(z_1, z_2) \,.
\end{equation}
Therefore, by Lemma \ref{L:gauss}(b),
\begin{equation}
\begin{split}
\| \nabla V_{\alpha,t} (x) - \nabla V_{\beta,t} (x) \|
&\leq \frac{e+2}{e C_d(2t)} \iint \|z_1-z_2\| d\mu(z_1, z_2) \\
&= \frac{e+2}{e C_d(2t)} W_1 (\alpha, \beta) \,,
\end{split}
\end{equation}
which proves the theorem.
\end{proof}

\section{Diffusion Fr\'{e}chet Vectors on Networks}
\label{S:nfrechet}

In this section, we define a network analogue of diffusion Fr\'{e}chet
functions and prove a stability theorem. Let
$\xi = [\xi_1 \, \ldots \, \xi_n]^T \in \real^n$ represent a probability distribution
on the vertex set $V = \{v_1, \ldots, v_n\}$ of a weighted
network $K$. For $t>0$, define the {\em diffusion Fr\'{e}chet vector\/} (DFV)
as the vector $F_{\xi,t} \in \real^n$ whose $i$th component is
\begin{equation}
F_{\xi,t} (i) = \sum_{j=1}^n d_t^2 (i,j) \xi_j \,.
\end{equation}

\begin{example}
We illustrate the behavior of DFVs on a social network
of frequent interactions among 62 dolphins in a community living off
Doubtful Sound, New Zealand \cite{lusseau}. The network data was obtained
from the UC Irvine Network Data Repository. All edges are given the
same weight and we consider the uniform distribution on the vertex set.
Figure \ref{F:dolphins} shows maps of the diffusion Fr\'{e}chet vector
at multiple scales. As in the Euclidean case, the profiles of the DFVs reveal
sub-communities in the network and their interactions.
\begin{figure}[ht]
\begin{center}
\begin{tabular}{cc}
   \includegraphics[width=0.35\linewidth]{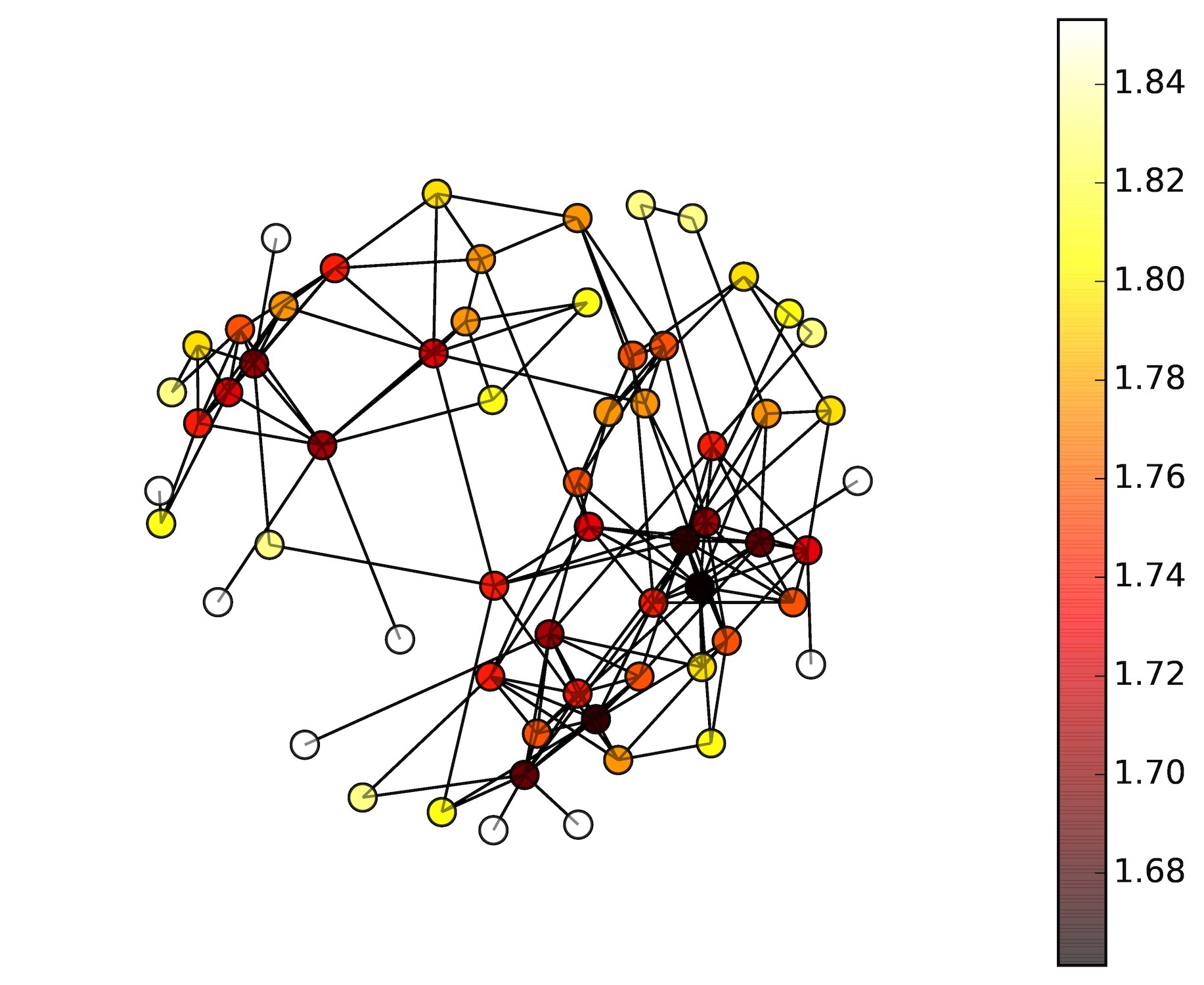} 
   \quad & \quad
   \includegraphics[width=0.35\linewidth]{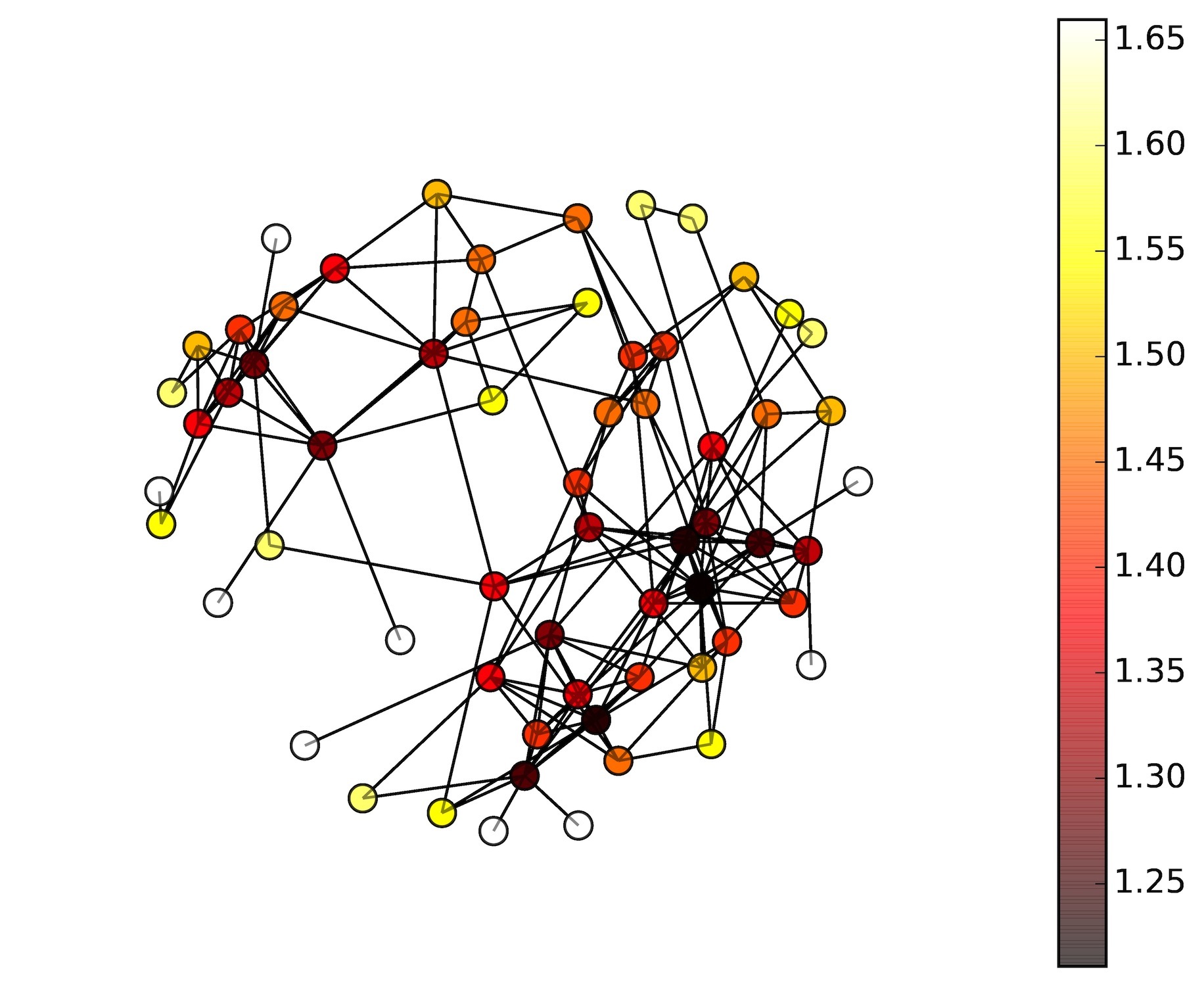} \\
   $t = 0.01$ \quad & \quad $t = 0.03$ \vspace{0.1in} \\
   \includegraphics[width=0.35\linewidth]{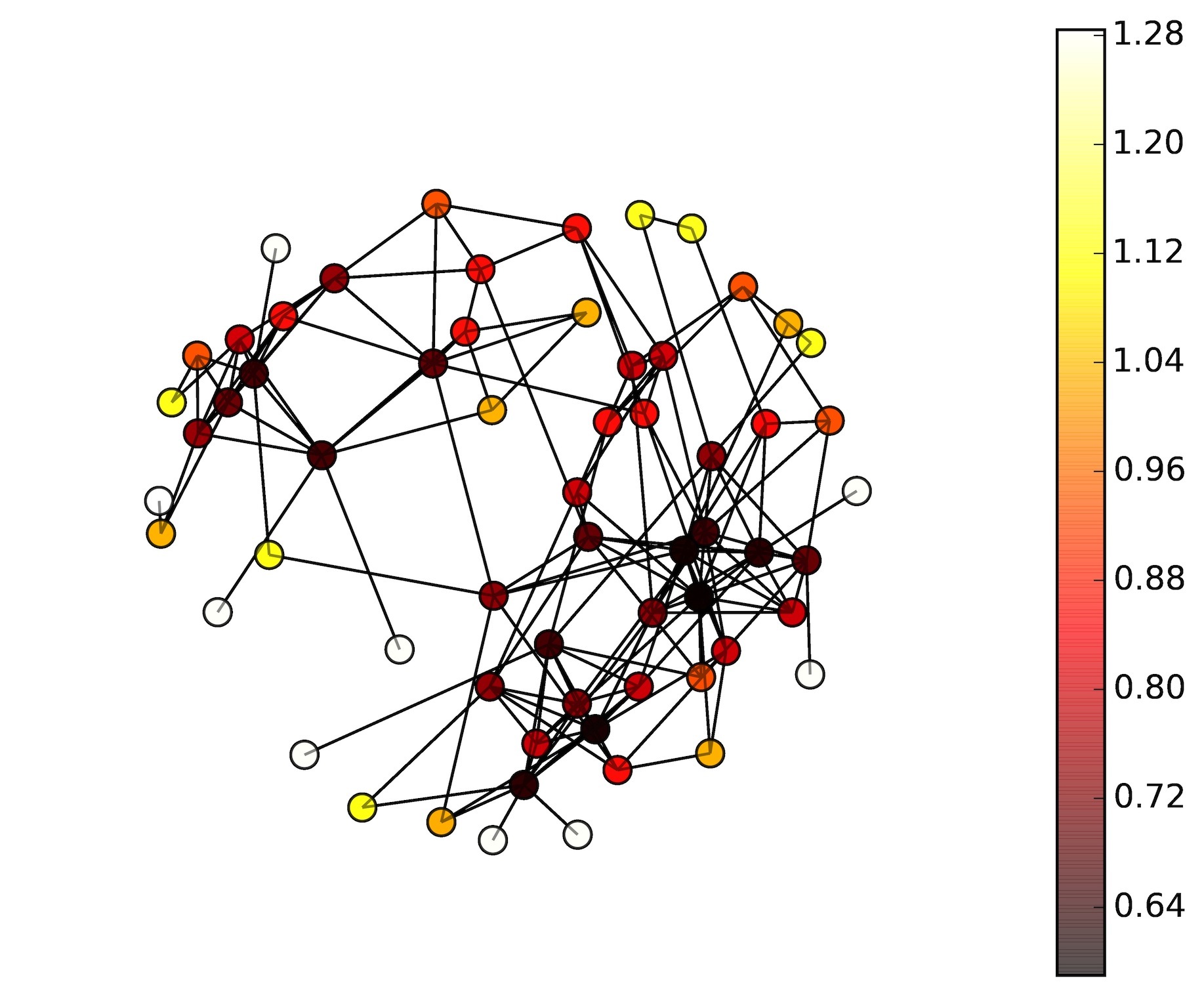} 
   \quad & \quad
   \includegraphics[width=0.35\linewidth]{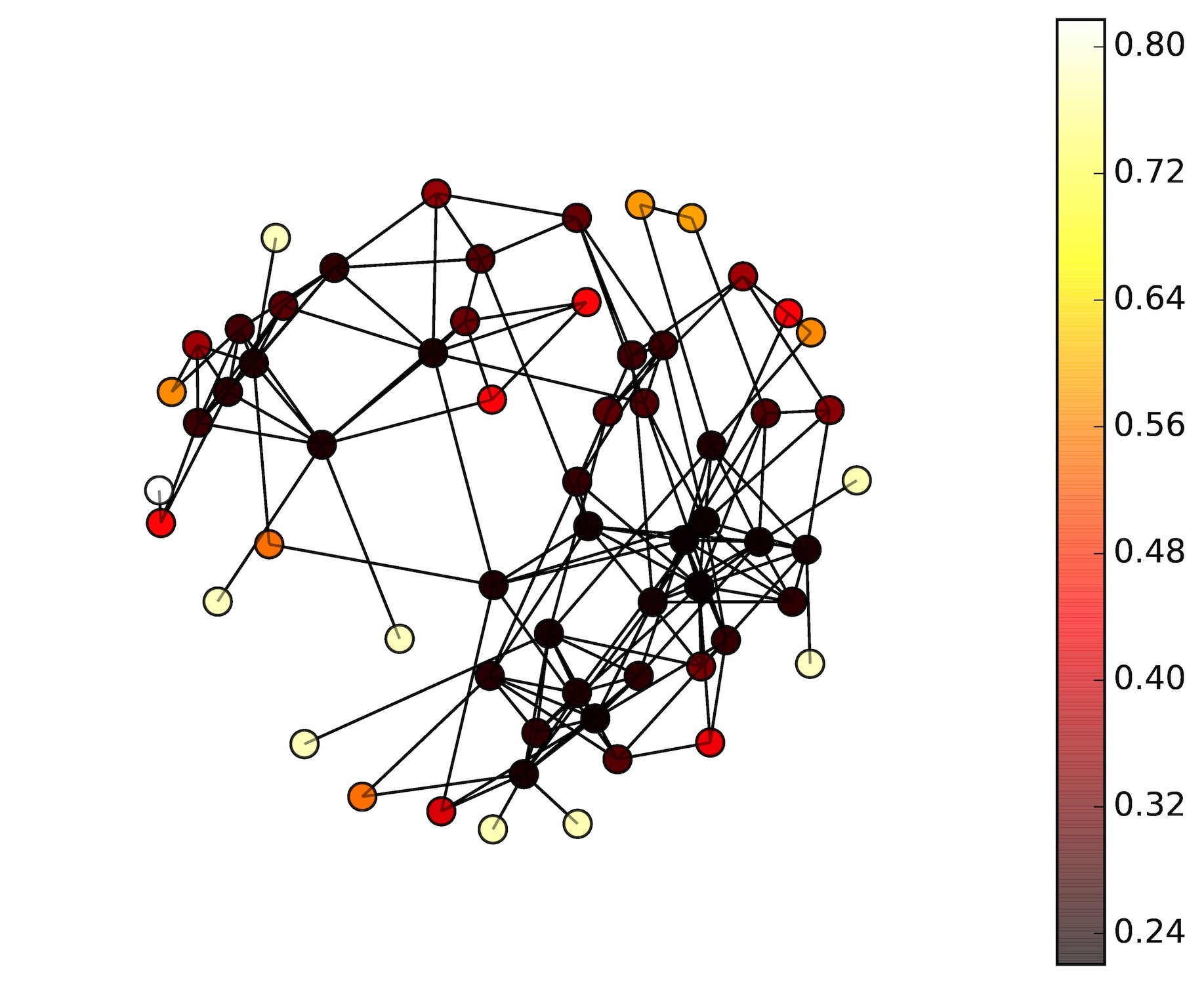} \\
   $t = 0.09$ \quad & \quad $t= 0.3$ \\
\end{tabular}
\end{center}
\caption{Evolution across scales of the diffusion Fr\'{e}chet vector
for the uniform distribution.}
\label{F:dolphins}
\end{figure}
\end{example}

To state a stability theorem for DFVs, we first define the Wasserstein
distance between two probability distributions on the vertex set $V$ of
a weighted network $K$.
This requires a base metric on $V$ to play a role similar to that of the
Euclidean metric in \eqref{E:wass}. We use the {\em commute-time
distance\/} between the nodes of a weighted network (cf. \cite{lovasz96}).
\begin{definition}
Let $K$ be a connected, weighted network with nodes
$v_1, \ldots, v_n$, and let $\phi_1, \phi_2, \ldots, \phi_n$ be an
orthonormal basis of $\real^n$ formed by
eigenvectors of the graph Laplacian with eigenvalues
$0 = \lambda_1 < \lambda_2 \leq \ldots
\leq \lambda_n$. The commute-time distance between $v_i$ and
$v_j$ is defined as
\begin{equation}
d_{CT} (i,j)= \left(
\sum_{k=2}^n \frac{1}{\lambda_k}\left(\phi_k(i)-\phi_k(j)\right)^2
\right)^{1/2}.
\end{equation}
\end{definition}
\noindent
It is simple to verify that $d_{CT}^2 (i,j) = 2 \int_0^\infty d_t^2 (i,j) \, dt$.
\begin{definition}
Let $\xi, \zeta \in \real^n$ represent probability distributions on $V$.
The $p$-Wasserstein distance, $p \geq 1$, between $\xi$ and
$\zeta$ (with respect to the commute-time distance on $V$) is
defined as
\begin{equation}
W_p (\xi, \zeta) = \min_{\mu \in \Gamma (\alpha, \beta)}
\left( \sum_{j=1}^n \sum_{i=1}^n d_{CT}^p(i,j) \mu_{ij} \right)^{1/p} \,,
\label{E:wasserstein-netw}
\end{equation}
where $\Gamma (\xi, \zeta)$ is the set of all probability measures
$\mu$ on $V \times V$ satisfying $(p_1)_\ast (\mu) = \xi$ and
$(p_2)_\ast (\mu) = \zeta$.
\end{definition}

\begin{theorem}
Let $\xi, \zeta \in \real^n$ be probability distributions on the
vertex set of a connected, weighted network. For each $t>0$, their diffusion
Fr\'{e}chet vectors satisfy
\begin{equation}
\|F_{\xi,t} - F_{\zeta,t} \|_\infty \leq 
4 \sqrt{\frac{\Tr e^{-2t \Delta }-1}{2 et}} \, W_1 (\xi, \zeta) \,.
\end{equation}
\end{theorem}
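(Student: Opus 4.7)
The plan is to follow the template of Theorem~\ref{T:stability1}. Fix $t>0$ and $i\in\{1,\ldots,n\}$, and let $\mu\in\Gamma(\xi,\zeta)$ be a coupling attaining the minimum in \eqref{E:wasserstein-netw} for $p=1$. Using the marginal conditions to rewrite each Fr\'{e}chet vector as a double sum against $\mu$, I obtain
\begin{equation*}
F_{\xi,t}(i) - F_{\zeta,t}(i) \;=\; \sum_{j,k} \bigl[d_t^2(i,j)-d_t^2(i,k)\bigr]\,\mu_{jk}.
\end{equation*}
After taking absolute values, the theorem reduces to the pointwise Lipschitz-type estimate
\begin{equation*}
\bigl|d_t^2(i,j)-d_t^2(i,k)\bigr| \;\leq\; 4\sqrt{\frac{\Tr e^{-2t\Delta}-1}{2et}}\; d_{CT}(j,k),
\end{equation*}
valid uniformly in the free index $i$. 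This is the discrete counterpart of Lemma~\ref{L:gauss}(a), and it is where essentially all the work lies.

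To obtain this estimate I would expand both squared diffusion distances in the orthonormal eigenbasis $\phi_1,\ldots,\phi_n$ of $\Delta$, factor the difference of squares via $a^2-b^2=(a-b)(a+b)$ with $a=\phi_\ell(i)-\phi_\ell(j)$ and $b=\phi_\ell(i)-\phi_\ell(k)$, and drop the $\ell=1$ term since $\phi_1$ is constant on a connected network. This yields
\begin{equation*}
d_t^2(i,j)-d_t^2(i,k) \;=\; \sum_{\ell=2}^n e^{-2\lambda_\ell t}\,(\phi_\ell(k)-\phi_\ell(j))\bigl(2\phi_\ell(i)-\phi_\ell(j)-\phi_\ell(k)\bigr).
\end{equation*}
The crucial step is then to split the weight as $e^{-2\lambda_\ell t}=\lambda_\ell^{-1/2}\cdot\lambda_\ell^{1/2}e^{-2\lambda_\ell t}$ and apply the Cauchy--Schwarz inequality: the first factor collapses to $d_{CT}(j,k)$, and the second becomes the square root of $\sum_{\ell\geq 2}\lambda_\ell e^{-4\lambda_\ell t}(2\phi_\ell(i)-\phi_\ell(j)-\phi_\ell(k))^2$.

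Two elementary bounds finish the job. First, $xe^{-2tx}\leq 1/(2et)$, the same calculus inequality that powers Lemma~\ref{L:gauss}(a), gives $\lambda_\ell e^{-4\lambda_\ell t}\leq e^{-2\lambda_\ell t}/(2et)$. Second, orthonormality forces $\|\phi_\ell\|_\infty\leq\|\phi_\ell\|_2=1$, so $(2\phi_\ell(i)-\phi_\ell(j)-\phi_\ell(k))^2\leq 16$ uniformly in $i,j,k$. Summing the surviving exponentials collapses them to $\Tr e^{-2t\Delta}-1$, and extracting the square root produces the stated constant $4\sqrt{(\Tr e^{-2t\Delta}-1)/(2et)}$. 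Substituting back into the double sum and recognizing $\sum_{j,k}d_{CT}(j,k)\mu_{jk}=W_1(\xi,\zeta)$ completes the argument once one takes the supremum over $i$.

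I expect the main obstacle to be calibrating the Cauchy--Schwarz split so that $d_{CT}$, the bound on $xe^{-2tx}$, and the trace all surface with compatible constants. A cruder route that bounds $|d_t^2(i,j)-d_t^2(i,k)|$ by $d_t(j,k)$ times a diameter of $(V,d_t)$ produces $d_t$ rather than $d_{CT}$ on the right-hand side and therefore falls outside the Wasserstein metric defined in \eqref{E:wasserstein-netw}. The split $\lambda_\ell^{-1/2}\cdot\lambda_\ell^{1/2}e^{-2\lambda_\ell t}$ is forced by the weights $1/\lambda_\ell$ in the definition of $d_{CT}$, and it is the only split that keeps the residual factor uniform in the free index $i$, which is what allows the supremum over $i$ to come through unscathed.
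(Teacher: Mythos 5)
Your proposal is correct and takes essentially the same route as the paper: reduce via an optimal coupling to a pointwise Lipschitz bound on the squared diffusion distance, expand in the Laplacian eigenbasis, drop the constant eigenvector $\phi_1$, and combine orthonormality bounds, Cauchy--Schwarz, and the calculus inequality $xe^{-2tx}\le 1/(2et)$ to surface the constant $4\sqrt{(\Tr e^{-2t\Delta}-1)/(2et)}$. The only difference is organizational: the paper's Lemma~\ref{L:commute} applies Cauchy--Schwarz with the split $e^{-2\lambda_k t}=e^{-\lambda_k t}\cdot e^{-\lambda_k t}$ (after bounding $|\phi_k(\ell)|\le 1$ and $|\phi_k(i)+\phi_k(j)|\le 2$) to obtain $4\sqrt{\Tr e^{-2t\Delta}-1}\,d_t(i,j)$ and then converts $d_t\le d_{CT}/\sqrt{2et}$ in a separate step, whereas your split $\lambda_\ell^{-1/2}\cdot\lambda_\ell^{1/2}e^{-2\lambda_\ell t}$ fuses these two steps into a single Cauchy--Schwarz that outputs $d_{CT}(j,k)$ directly, with the identical final constant.
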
 

\begin{proof}
Fix $t>0$ and a node $v_\ell$.  Let $\mu \in \Gamma(\xi,\zeta)$
be such that
\begin{equation}
\sum_{j=1}^n \sum_{i=1}^n d_{CT}(i,j)\mu(i,j) = W_1 (\xi, \zeta)\,.
\end{equation}
Since $\mu$ has marginals $\alpha$ and $\beta$, we may write
\begin{equation} 
F_{\xi,t}(\ell)=\sum_{j=1}^n \sum_{i=1}^n  d_t^2(i,\ell) \mu_{i,j}
\quad \text{and} \quad
F_{\zeta,t}(\ell)=\sum_{j=1}^n \sum_{i=1}^n  d_t^2(j,\ell) \mu_{i,j} \,,
\end{equation}
which implies that
\begin{equation}
\left\vert F_{\xi,t}(\ell) - F_{\zeta,t}(\ell) \right\vert \leq
\sum_{j=1}^n \sum_{i=1}^n  \left| d_t^2(i, \ell) -
d_t^2(j, \ell)\right|\mu_{i,j} \,.
\end{equation}
By Lemma \ref{L:commute} below,
\begin{equation} \label{E:ineq1}
\begin{split}
\left\vert F_{\xi,t}(\ell) - F_{\zeta,t}(\ell) \right\vert \leq &
 4\sqrt{\Tr e^{-2\Delta t}-1} \sum_{j=1}^n \sum_{i=1}^n d_t(i,j)\mu_{i,j} \,.
\end{split}
\end{equation}
Observe that
\begin{equation} \label{E:ineq2}
\begin{split}
d_t^2(i,j) &= \sum_{k=1}^{n}e^{-2\lambda_kt}\left(\phi_k(i)-\phi_k(j)\right)^2 \\
&= \sum_{k=1}^{n}\lambda_k e^{-2\lambda_kt}\frac{1}{\lambda_k}
\left(\phi_k(i)-\phi_k(j)\right)^2 \leq \frac{1}{2e t} d_{CT}^2(i,j)
\end{split}
\end{equation}
since $\lambda_k e^{-2 \lambda_k t} \leq \frac{1}{2et}$.
The theorem follows from \eqref{E:ineq1}
and \eqref{E:ineq2}.
\end{proof}

\begin{lemma} \label{L:commute}
Let $v_i,v_j,v_{\ell}$ be nodes of a connected, weighted network. For
any $t>0$,
\begin{equation*}
\left\vert d_t^2(i,\ell)-d_t^2(j,\ell) \right\vert \leq
4\sqrt{\Tr e^{-2t \Delta} -1}  \, d_t(i,j)\,,
\end{equation*}
where $\Tr$ denotes the trace operator.
\end{lemma}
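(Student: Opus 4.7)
The plan is to reduce the bound to a Cauchy--Schwarz estimate in $\real^n$ by exploiting the isometric embedding $\psi_t\colon V\to\real^n$ that defines $d_t$. Writing the difference of squared distances using the polarization-type identity
\[
\|a-c\|^2-\|b-c\|^2=\langle a-b,\, a+b-2c\rangle,
\]
with $a=\psi_t(v_i)$, $b=\psi_t(v_j)$, $c=\psi_t(v_\ell)$, Cauchy--Schwarz yields
\[
\bigl|d_t^2(i,\ell)-d_t^2(j,\ell)\bigr|\le d_t(i,j)\,\bigl\|\psi_t(v_i)+\psi_t(v_j)-2\psi_t(v_\ell)\bigr\|.
\]
So it suffices to show that the second factor is bounded by $4\sqrt{\Tr e^{-2t\Delta}-1}$, uniformly in $i,j,\ell$.

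To estimate this factor, I would expand $\psi_t(v_k)=e^{-t\Delta}e_k=\sum_{m=1}^n e^{-t\lambda_m}\phi_m(k)\,\phi_m$ in the orthonormal eigenbasis of the Laplacian. Then
\[
\psi_t(v_i)+\psi_t(v_j)-2\psi_t(v_\ell)=\sum_{m=1}^n e^{-t\lambda_m}\bigl(\phi_m(i)+\phi_m(j)-2\phi_m(\ell)\bigr)\phi_m,
\]
so by orthonormality
\[
\bigl\|\psi_t(v_i)+\psi_t(v_j)-2\psi_t(v_\ell)\bigr\|^2=\sum_{m=1}^n e^{-2t\lambda_m}\bigl(\phi_m(i)+\phi_m(j)-2\phi_m(\ell)\bigr)^2.
\]
The key observation is that the $m=1$ term drops out: since $K$ is connected, $\lambda_1=0$ is simple and $\phi_1$ is the constant vector $n^{-1/2}(1,\ldots,1)$, so $\phi_1(i)+\phi_1(j)-2\phi_1(\ell)=0$. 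This is what allows the bound to involve $\Tr e^{-2t\Delta}-1=\sum_{m=2}^n e^{-2t\lambda_m}$ rather than the full trace.

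For the remaining terms, each $\phi_m$ is a unit vector in $\real^n$, so every coordinate satisfies $|\phi_m(k)|\le 1$, whence $|\phi_m(i)+\phi_m(j)-2\phi_m(\ell)|\le 4$. Therefore
\[
\bigl\|\psi_t(v_i)+\psi_t(v_j)-2\psi_t(v_\ell)\bigr\|^2\le 16\sum_{m=2}^n e^{-2t\lambda_m}=16\bigl(\Tr e^{-2t\Delta}-1\bigr),
\]
and combining with the Cauchy--Schwarz estimate gives the claimed inequality. The only genuinely non-routine step is recognizing that the $\lambda_1=0$ mode must be removed via the constancy of $\phi_1$; without this, one would obtain the weaker (and useless, since $\Tr e^{-2t\Delta}\ge 1$ always) bound with $\Tr e^{-2t\Delta}$ in place of $\Tr e^{-2t\Delta}-1$.
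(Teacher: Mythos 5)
Your proof is correct and takes essentially the same route as the paper: the paper expands $d_t^2(i,\ell)-d_t^2(j,\ell)$ in the eigenbasis, drops the constant mode $\phi_1$ using connectedness, bounds $|\phi_k(i)+\phi_k(j)-2\phi_k(\ell)|$ termwise by $4$ via $\|\phi_k\|=1$, and then applies a weighted Cauchy--Schwarz to extract $\sqrt{\Tr e^{-2t\Delta}-1}\,d_t(i,j)$ --- which is precisely your polarization identity $\|a-c\|^2-\|b-c\|^2=\langle a-b,\,a+b-2c\rangle$ followed by Cauchy--Schwarz, with the pointwise bound and the Cauchy--Schwarz step performed in the opposite order. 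The ingredients and the final constant are identical, so there is nothing to flag.
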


\begin{proof}
Since the eigenfunction $\phi_1$ is constant, we may write the
diffusion distance as
\begin{equation} \label{E:diffdist}
\begin{split}
d_t^2 (i,\ell) = \sum_{k=2}^n e^{-2\lambda_kt}
\left(\phi_k^2 (i) - 2 \phi_k (i) \phi_k(\ell) + \phi_k^2 (\ell) \right) \,.
\end{split}
\end{equation}
Thus,
\begin{equation}
\begin{split}
d_t^2(i,\ell)-d_t^2(j,\ell) &=
\sum_{k=2}^{n}e^{-2\lambda_kt}\left(\phi_k^2(i)-\phi_k^2(j)\right) \\
&-2 \sum_{k=2}^{n}e^{-2\lambda_kt}\phi_k(\ell)\left(\phi_k(i)-\phi_k(j)\right) \\
&=  \sum_{k=2}^{n}e^{-2\lambda_kt} (\phi_k(i)+\phi_k(j)) (\phi_k(i)-\phi_k(j)) \\
&- 2\sum_{k=2}^{n}e^{-2\lambda_kt}\phi_k(\ell)\left(\phi_k(i)-\phi_k(j)\right) \,.
\end{split}
\end{equation}
Since each $\phi_k$ has unit norm,
$|\phi_k (\ell)| \leq 1$ and  $|\phi_k (i) + \phi_k (j)| \leq 2$ \,.
Therefore,
\begin{equation} \label{E:difflip}
\left\vert d_t^2(i,\ell)-d_t^2(j,\ell) \right\vert  \leq
4 \sum_{k=2}^{n} e^{-2\lambda_kt} \left| \phi_k(i)-\phi_k(j) \right| \,.
\end{equation}
The Cauchy-Schwarz inequality, applied to the vectors
$a = \left( e^{-\lambda_2 t}, \ldots, e^{-\lambda_n t} \right)$ and
$b = \left( e^{-\lambda_2 t} \left| \phi_2(i)-\phi_2 (j) \right|, \ldots,
e^{-\lambda_n t} \left| \phi_n(i)-\phi_n (j) \right| \right)$, yields
\begin{equation} \label{E:cs}
\begin{split}
\sum_{k=2}^{n} e^{-2\lambda_kt} \left| \phi_k(i)-\phi_k (j) \right|
&\leq \sqrt{\sum_{k=2}^{n}e^{-2\lambda_kt}} \,
\sqrt{\sum_{k=2}^{n}e^{-2\lambda_kt}\left(\phi_k(i)-\phi_k(j)\right)^2} \\
&= \sqrt{\Tr e^{-2t \Delta} -1} \, d_t(i,j) \,.
\end{split}
\end{equation}
The lemma follows from \eqref{E:difflip} and \eqref{E:cs}.
\end{proof}

\section{{\em C. difficile\/} Infection and Fecal Microbiota Transplantation}
\label{S:cdi}

This section presents an application to analyses of
microbiome data associated with {\em Clostridium difficile} infection
(CDI). CDI kills thousands of patients every year in healthcare facilities
\cite{kelly}.  Traditionally, CDI is treated with antibiotics, but the drugs
also attack other bacteria in the gut flora and this has been linked to
recurrence of CDI in recovering patients
\cite{vincent}.  Fecal microbiota transplantation (FMT) is a promising
alternative that shows recovery rates close to $90\%$ \cite{bakken, gough}.
Shahinas {\em et al.}\,\cite{shahinas} and Seekatz {\em et al.}\,\cite{seekatz}
have used 16S rRNA sequencing to estimate the abundance of the various
bacterial taxa living in the gut of healthy donors and CDI patients. These
studies have reported a reduced diversity in the bacterial communities
of CDI patients and abundance scores after FMT treatment that are closer
to those for healthy donors. The studies, however, have focused on counts
of bacterial taxa, disregarding interactions in the bacterial communities. To
account for these, we employ diffusion Fr\'{e}chet vectors to examine
differences in pre-treatment and
post-treatment fecal samples and the effect of FMT in the composition
of the gut flora. The analysis is based on metagenomic data for 17 patients
(paired pre-FMT and post-FMT) and 7 donor samples, selected from data
collected by Lee {\em et al.}\,\cite{lee}.

\subsection{Metagenomic Data} \label{S:data}

The data comes from a subset of 94 patients treated with fecal
microbiota transplantation by
one of the authors \cite{lee}, covering the period 2008--2012. From this,
17 patients were selected, not randomly, for sequencing. The protocol
followed for obtaining consent consisted of first sending a written letter
to each patient asking permission to further study their already collected
stool samples. In the letter it was stated that there will be a followup
telephone call to the patient verifying that they received the letter and
whether they will provide consent 5-10 business days following the
date of the mailing. All patients in this study were contacted, and all
patients provided written informed consent. Furthermore, this study
and permission protocol was approved by the Hamilton Integrated
Research Ethics Board \#12-3683, the University of Guelph
Research Ethics Board 12AU013 and the Florida State University
Research Ethics IRB00000446.

All {\em C. difficile} infections were confirmed by in-hospital, real-time,
polymerase chain reaction (PCR) testing for the toxin B gene. This study
sequenced the forward V3-V5 region of the 16S rRNA gene from 17 CDI
patients who were treated with FMT(s). A pre-FMT, a corresponding
post-FMT, and 7 samples from four donors, corresponding altogether
to 41 fecal samples were sequenced. 


The bioinformatics software {\tt mothur} was used
as the primary means of processing and quality-filtering reads and
calculating statistical indices of community structure; see \cite{rush}
for a breakdown of the {\tt mothur} processing pipeline.

\subsection{Co-occurrence Networks} \label{S:network}

This study is based on bacterial interactions at the phylum level.
We model the (expected) interactions among the various phyla found
in the healthy human gut by means of a co-occurrence network
\cite{junker} in which each node represents a phylum. An
edge between two phyla is weighted according to the correlation
between their counts, estimated from samples taken from a group
of healthy individuals. More precisely, let $v_1, \ldots, v_n$
be the nodes of the network and $\rho_{ij}$, $i \ne j$, be the correlation
coefficient between the counts for the phyla represented by $v_i$ and $v_j$.
As we are interested in sub-communities
of interactive phyla, the edge between $v_i$ and $v_j$ is weighted by
the absolute correlation $w_{ij} = |\rho_{ij}|$, disregarding whether the
correlation is positive or negative. Since this construction typically yields
a fully connected network, we use the locally adaptive network
sparsification (LANS) technique \cite{foti} to simplify the network,
retaining the most significant interactions (edges) while keeping the
network connected. The following description of LANS is equivalent
to that in \cite{foti}. Define
\begin{equation}
F_{ij}=\frac{1}{n}\sum_{k=1}^{n}\mathbf{1}\{w_{ik}\leqslant w_{ij}\} \,,
\end{equation}
where $\mathbf{1}\{w_{ik}\leqslant w_{ij}\}$ returns $1$ if
$w_{ik}\leqslant w_{ij}$ and $0$ otherwise. For a given pair $(i,j)$, $F_{ij}$
is the probability that the absolute correlation between the counts for a
random phylum and $v_i$ is no larger than $w_{ij}$.  Observe that $F_{ij}$
may differ from $F_{ji}$. For $i \ne j$, the decision as to whether the edge
between $v_i$ and $v_j$ is deleted or preserved is based on a
``significance'' level $0 \leqslant \alpha \leqslant 1$. The edge is
preserved if $1-F_{ij} < \alpha$ or $1-F_{ji} < \alpha$.  Larger values of
$\alpha$ retain more edges of the network. 

The model we develop is based on seven bacterial phyla:
\emph{Actinobacteria}, \emph{Bacteroidetes}, \emph{Firmicutes}, \emph{Fusobacteria}, \emph{Proteobacteria}, \emph{Verrucomicrobia}, and a group of unidentified bacteria treated as a
single phylum labeled Unclassified. Figure \ref{F:graph} shows the
co-ocurrence network obtained after LANS sparsification with $\alpha= 0.1$. 
Dark edge colors indicate a higher level of interaction; that is, larger
edge weights. Note that there is a highly interactive sub-community
comprising \emph{Actinobacteria}, \emph{Verrucomicrobia} and unclassified bacteria.
This network is used in our analyses of variation in the structure of bacterial communities in samples from CDI patients, recovering patients and healthy
individuals, as well as the effects of FMT. (The package \emph{NetworkX} for
the Python programming language was used to depict the
network \cite{netw}.)
\begin{figure}[h!]
\begin{center}
\includegraphics[width=0.4\textwidth]{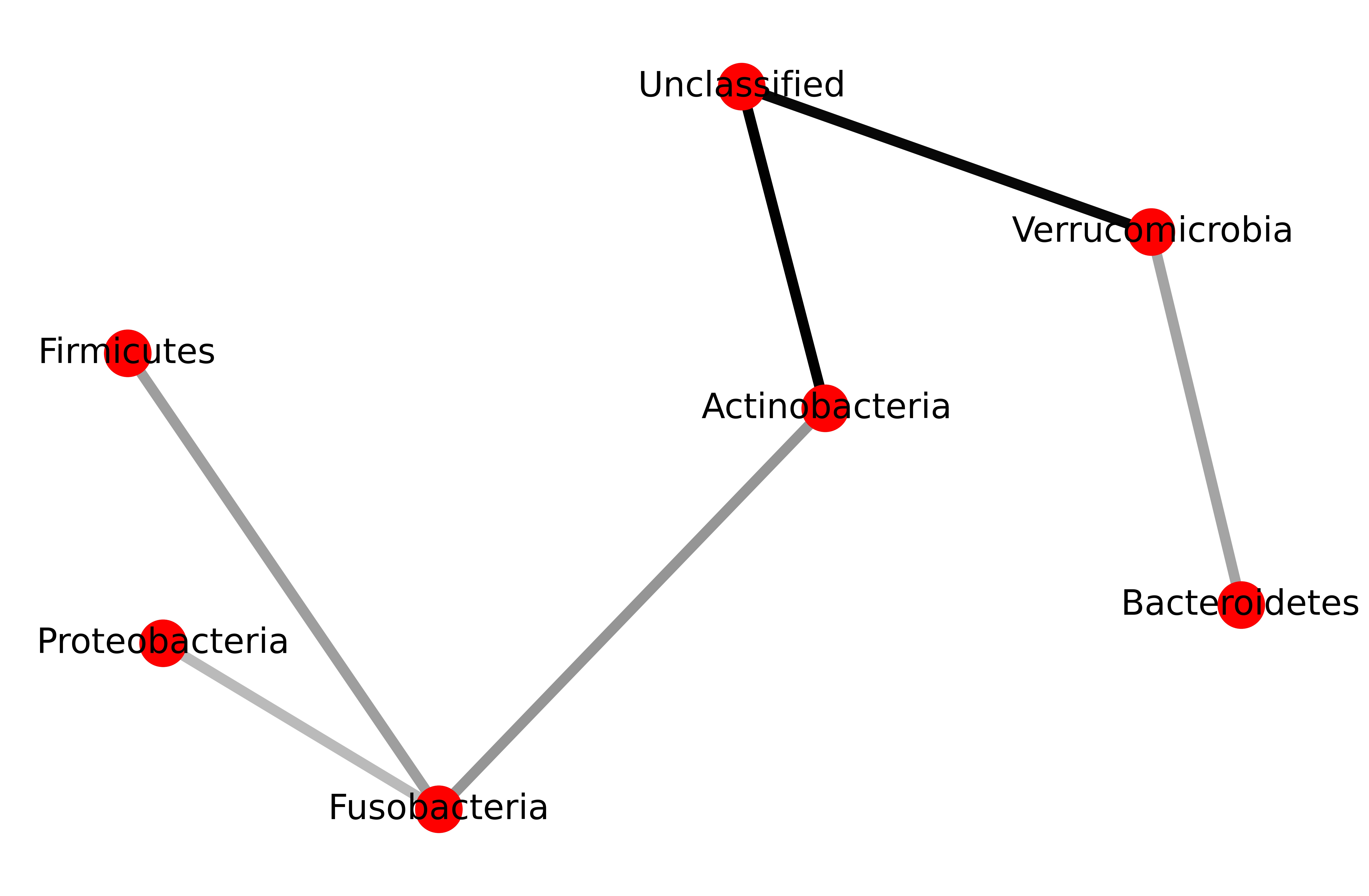}
\end{center}       
\caption{Bacterial phyla co-occurrence network for the human gut
sparsified to significance level $\alpha = 0.1$ with the LANS method.}
\label{F:graph}
\end{figure}

\subsection{Microbiota Analysis} \label{S:analysis}

The co-occurrence network constructed in Section \ref{S:network}
(Figure \ref{F:graph}) from culture data for healthy donors
provides a model for the expected interactions among the seven
bacterial phyla considered in this study. We use the network and
bacterial count data to produce a biomarker $\gamma_t$ that is
effective in characterizing CDI and potentially in monitoring
the effects of FMT treatment. To establish a baseline, we also
construct a biomarker $\beta$ solely based on bacterial counts
and compare it with $\gamma_t$.

Let $v_1, \ldots, v_7$ be the nodes of the co-occurrence network.
For a gut culture sample $S$, let $\xi_i (S)$ be the frequency of
$v_i$ in $S$. Clearly, $\xi_1 (S) + \ldots + \xi_7 (S) = 1$. Our first
method of analysis is based directly on the probability distribution
on the vertex set given by
\begin{equation}
\xi (S) = \left( \xi_1(S) , \ldots , \xi_7 (S) \right) \in \real^7 \,.
\end{equation}
To derive a scalar biomarker $\beta (S)$, we
use culture samples from healthy individuals and CDI patients. We
calculate their 7-dimensional frequency vectors $\xi (S)$ and use linear
discriminant analysis (LDA) to learn an axis in $\real^7$ along
which their scores $\beta (S)$ optimally discriminate healthy samples
from those of CDI patients. 
For a sample $S$, $\beta (S)$ is the score of $\xi (S)$ along the
learned axis. Next, we describe the biomarker $\gamma_t$.
For a sample $S$, let 
\begin{equation}
F^S_t (i) = \sum_{j=1}^7 d^2_t (i,j) \xi_j (S)
\end{equation}
be the diffusion Fr\'{e}chet vector for the distribution $\xi (S)$. 
As before, using training data and applying LDA, we
obtain $\gamma_t$.

\subsection{Results} \label{S:results}

Our analyses were based on 41 gut culture samples comprising 7 healthy
donors, 17 pre-treatment CDI patients (pre-FMT), 4 post-treatment patients
not in resolution (post-NR), and 13 post-treatment patients in resolution
(post-R). Figure \ref{F:freq} shows boxplots for each of the four groups
of the distribution of the phylum frequency data obtained from
metagenomic sequencing of the forty-one samples.
\begin{figure}[h!] 
\begin{center}
\includegraphics[width=4.5in]{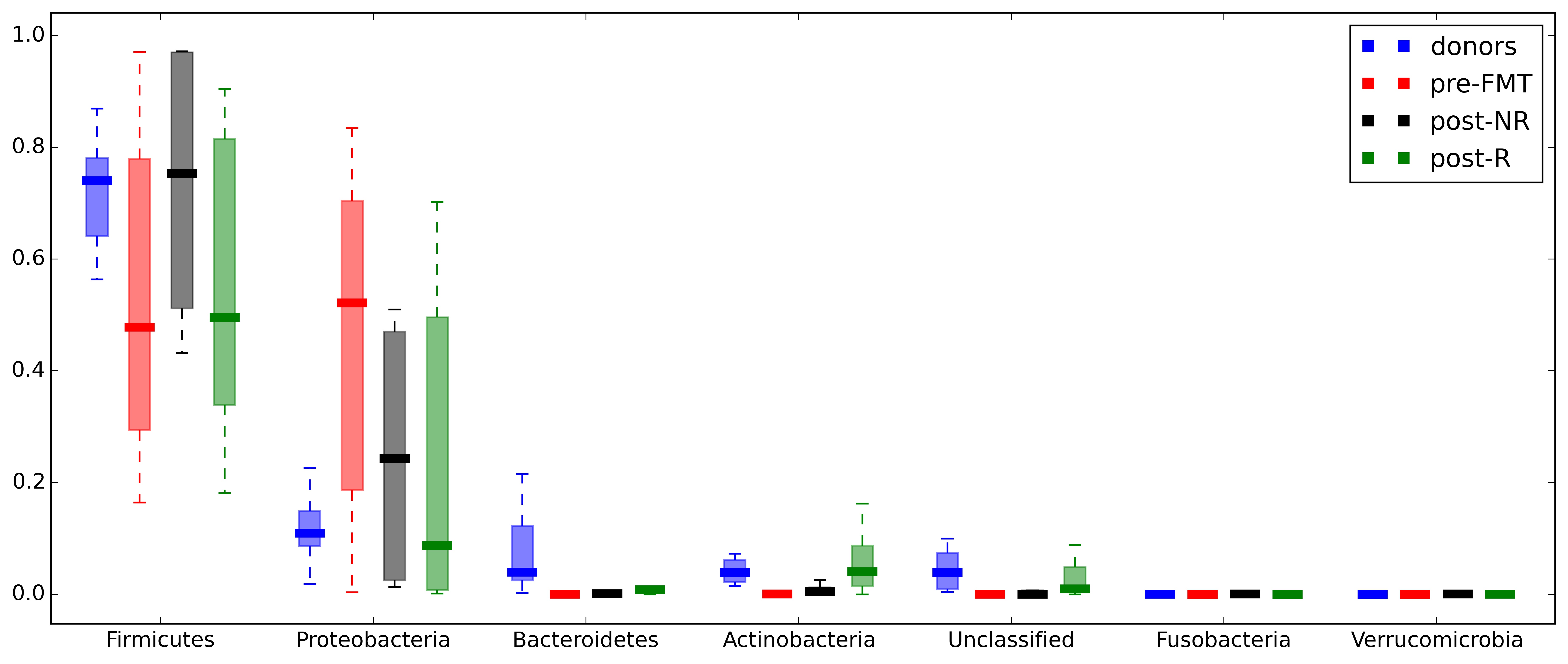}
\end{center}          
\caption{Boxplot of bacterial phylum frequency in the human gut
for healthy donors, pre-FMT patients, post-FMT patients not in
resolution, and post-FMT patients in resolution.}
\label{F:freq}
\end{figure}
Due to the relatively small sample size, we formed a Healthy group
comprising all samples from donors and post-FMT patients in resolution
and a CDI group consisting of all samples from pre-FMT and post-FMT
patients not in resolution. Inclusion of post-R samples in the Healthy
group has the virtue of challenging the biomarkers $\beta$ and
$\gamma_t$ to be sensitive to partial restoration to normal of the
gut flora of recovering patients.

From the frequency data, we constructed $\beta$, as described in
Section \ref{S:analysis}. Linear discriminant analysis yielded an axis
in $\real^7$ along a direction determined by a unit vector whose
loadings are specified in Table \ref{T:loadings}. The loadings revealed that
$\beta$ captures a complex combination of the frequencies of the
various phyla, with only \emph{Fusobacteria} and \emph{Verrucomicrobia} playing
lesser roles due to their low frequencies.
\begin{table}[h!]
\begin{center}
    \begin{tabular}{| c | c | c |} \hline
	Phylum & LDA loadings for $\beta$ & LDA loadings for
	$\gamma_t$ \\ \hline
	\emph{Firmicutes} & -0.412 & -0.079 \\ \hline
	\emph{Proteobacteria} & -0.644 & 0.059 \\ \hline
	\emph{Bacteroidetes} & 0.517 & -0.623 \\ \hline
	\emph{Actinobacteria} & 0.267 & -0.340 \\ \hline
	Unclassified & 0.275 & -0.443 \\ \hline
	\emph{Fusobacteria} & -0.010 & -0.119 \\ \hline
	\emph{Verrucomicrobia} & 0.006 & -0.525 \\ \hline
    \end{tabular}
\end{center}
\caption{Loadings of the directions that determine the axes in
7-D space for $\beta$ and $\gamma_t$.}
\label{T:loadings}
\end{table}
\begin{figure}[ht!]
\begin{center}
\includegraphics[width=4.5in]{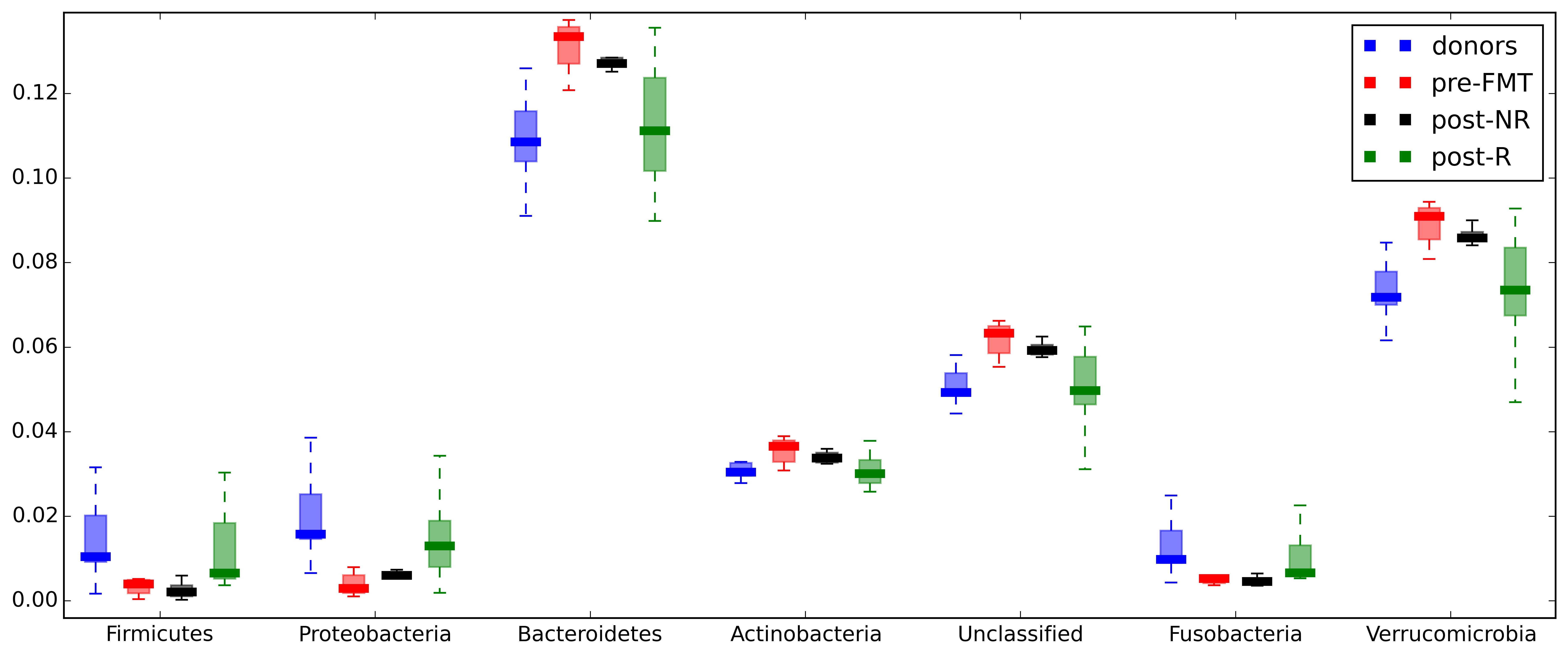}    
\end{center}              
\caption{Boxplot of the values of the diffusion Fr\'{e}chet functions.}
\label{F:dffs}
\end{figure}

A similar analysis was carried out for $\gamma_t$. We tested a range of
values for the significance level $\alpha$ used in network sparsification
and the scale parameter $t$. The values $\alpha = 0.1$ and $t = 7.75$
were selected because they optimized the performance of $\gamma_t$
as measured by the area under its receiver operating characteristic
(ROC) curve \cite{fawcett}, a plot of the true positive rate (sensitivity)
against the false positive rate (1 - specificity) at different threshold
levels. Figure \ref{F:dffs}
shows boxplots of the values of the diffusion Fr\'{e}chet function for
the four groups. Table \ref{T:loadings} shows the loadings for a unit vector
in the direction of the axis in $\real^7$ space associated with $\gamma_t$
that indicate that the composition of the sub-communities
associated with \emph{Bacteroidetes} and \emph{Verrucomicrobia} have a dominant
role in characterizing CDI through $\gamma_t$, followed by Unclassified
and \emph{Actinobacteria}. Note that the model identifies
\emph{Verrucomicrobia} as a key player, whereas its contribution to $\beta$
is minor simply because its count is significantly smaller than the
counts for other phyla. 

Figure \ref{F:roc} provides a comparison of the
ROC curves for $\beta$ and $\gamma_t$, $t = 7.75$, demonstrating
that $\gamma_t$ has a superior performance relative to $\beta$ in
characterizing CDI and recovery from CDI.
\begin{figure}[ht]
\begin{center}
\includegraphics[width=2.5in]{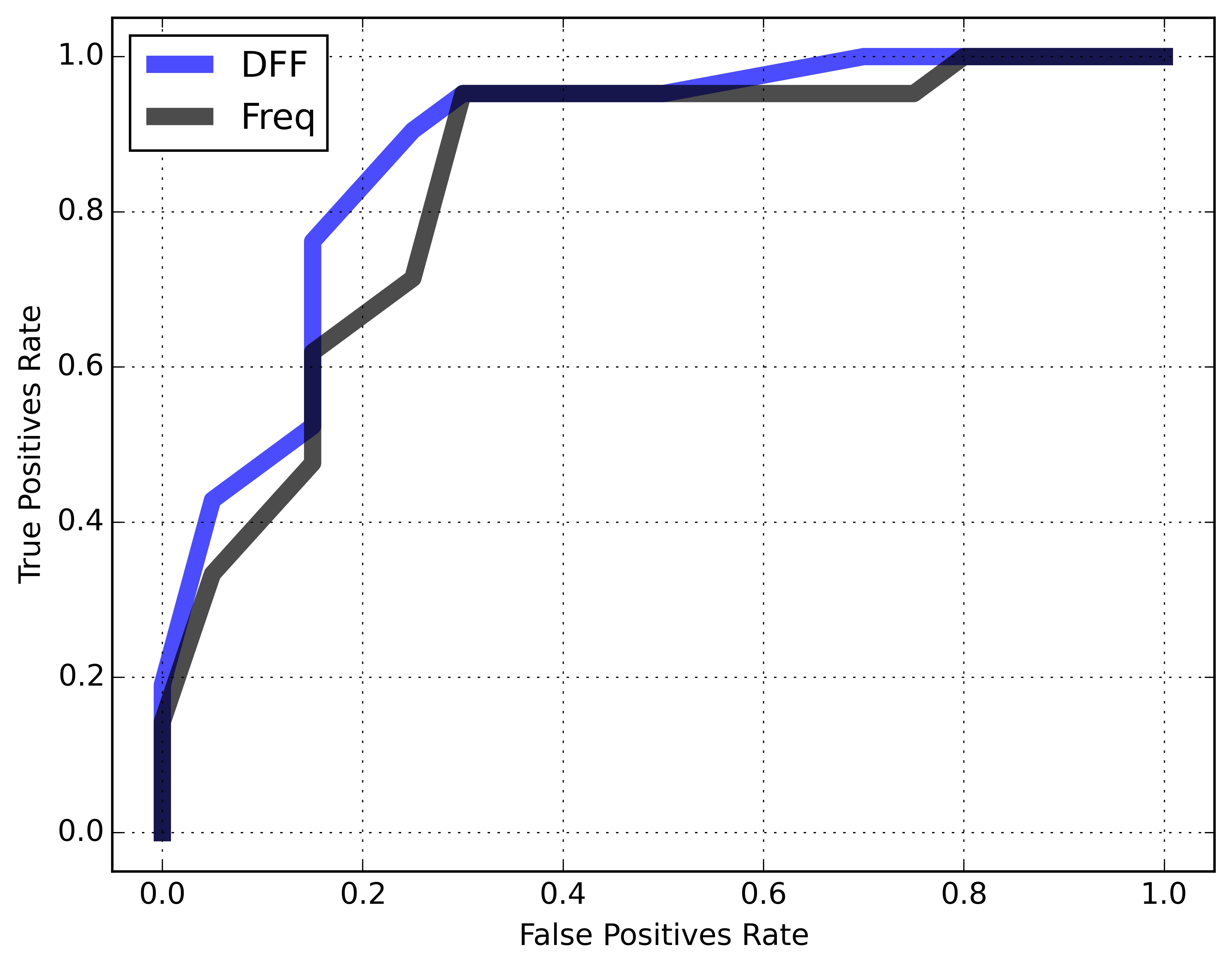}            
\end{center}
\caption{ROC curves for $\beta$ (black) and $\gamma_t$ (blue),
for $t=7.75$.}
\label{F:roc}
\end{figure}
At the threshold value $a = -1.128$,
$\gamma_t$ attained a true positive rate of $83\%$ and a false positive rate
of $20\%$, as estimated from data for 20 healthy and 21 infected samples,
respectively. At $a = -1.108$, the sensitivity increased to $91\%$ with a false
positive rate of $25\%$. As samples from recovering CDI patients whose
gut flora are only partially restored to normal have been included in the healthy
group, we conclude that $\gamma_t$ also shows good sensitivity to the
effects of FMT treatment. In this regard, we note that a significant
portion of the false positives correspond to samples from post-FMT treatment
patients in resolution whose gut flora are only partially restored to normal.

\section{Concluding Remarks} \label{S:remarks}

We introduced diffusion Fr\'{e}chet functions and diffusion Fr\'{e}chet
vectors for probability measures on Euclidean space and weighted
networks that integrate geometric information at multiple spatial
scales, yielding a pathway to the quantification of their shapes.
We proved that these functional statistics are stable with respect
to the Wasserstein distance, providing a theoretical basis for their
use in data analysis. To demonstrate the usefulness of these concepts,
we discussed examples using simulated data and applied DFVs to
the analysis of data associated with fecal microbiota transplantation
that has been used as an alternative to antibiotics in treatment of
recurrent CDI.  Among other things, the method provides a technique
for detecting the presence and studying the organization of
sub-communities at different scales. The approach enables us to
address such problems as quantification of structural variation in
bacterial communities associated with a cohort of
individuals (for example, bacterial communities in the gut of
multiple individuals), as well as associations between structural
changes, health and disease. 

Diffusion Fr\'{e}chet functions may be defined in the broader framework
of metric spaces equipped with a diffusion kernel. However, their
stability in this general setting remains to be investigated. As
remarked in Section \ref{S:efrechet}, the present work suggests
the development of refinements of diffusion Fr\'{e}chet
functions such as diffusion covariance tensor fields for Borel
measures on Riemannian manifolds to make their geometric
properties more readily accessible. 


\section*{Acknowledgements}
This research was supported in part by: NSF grants  DBI-1262351
and DMS-1418007; CIHR-NSERC Collaborative Health Research Projects (413548-2012); and NSF under Grant DMS-1127914 to SAMSI.

\section*{References}

\bibliography{cdi}

\begin{thebibliography}{10}
\expandafter\ifx\csname url\endcsname\relax
  \def\url#1{\texttt{#1}}\fi
\expandafter\ifx\csname urlprefix\endcsname\relax\def\urlprefix{URL }\fi
\expandafter\ifx\csname href\endcsname\relax
  \def\href#1#2{#2} \def\path#1{#1}\fi

\bibitem{coifman}
R.~R. Coifman, S.~Lafon, Diffusion maps, Applied and Computational Harmonic
  Analysis 21~(1) (2006) 5--30.
\newblock \href {http://dx.doi.org/10.1016/j.acha.2006.04.006}
  {\path{doi:10.1016/j.acha.2006.04.006}}.

\bibitem{villani09}
C.~Villani, Optimal Transport, Old and New, Springer, 2009.

\bibitem{scalespace}
P.~Chaudhuri, J.~Marron, Scale space view of curve estimation, The Annals of
  Statistics 28~(2) (2000) 408--428.

\bibitem{lux}
U.~von Luxburg, A tutorial on spectral clustering, Statistics and Computing
  17~(4) (2007) 395--416.
\newblock \href {http://dx.doi.org/10.1007/s11222-007-9033-z}
  {\path{doi:10.1007/s11222-007-9033-z}}.

\bibitem{mmm13}
D.~D\'{i}az~Mart\'{i}nez, F.~M\'{e}moli, W.~Mio, Multiscale covariance fields,
  local scales, and shape transforms, in: F.~Nielsen, F.~Barbaresco (Eds.),
  Geometric Science of Information, Vol. 8085 of Lecture Notes in Computer
  Science, Springer, 2013, pp. 794--801.

\bibitem{diaz1}
D.~D{\'\i}az~Mart{\'\i}nez, F.~M{\'e}moli, W.~Mio, The shape of data and
  probability measures, arXiv preprint arXiv:1509.04632.

\bibitem{lusseau}
D.~Lusseau, K.~Schneider, O.~J. Boisseau, P.~Haase, E.~Slooten, S.~M. Dawson,
  The bottlenose dolphin community of {D}oubtful {S}ound features a large
  proportion of long-lasting associations, Behavioral Ecology and Sociobiology
  54~(4) (2003) 396--405.
\newblock \href {http://dx.doi.org/10.1007/s00265-003-0651-y}
  {\path{doi:10.1007/s00265-003-0651-y}}.

\bibitem{lovasz96}
L.~Lov{\'a}sz, Random walks on graphs: A survey, Combinatorics, Paul Erdos is
  Eighty 2 (1996) 353--398.

\bibitem{kelly}
C.~P. Kelly, J.~T. LaMont, {\em Clostridium difficile\/} more difficult than
  ever, New England Journal of Medicine 359~(18) (2008) 1932--1940.
\newblock \href {http://dx.doi.org/10.1056/NEJMra0707500}
  {\path{doi:10.1056/NEJMra0707500}}.

\bibitem{vincent}
C.~Vincent, D.~A. Stephens, V.~G. Loo, T.~J. Edens, M.~A. Behr, K.~Dewar, A.~R.
  Manges, Reductions in intestinal \emph{{C}lostridiales\/} precede the
  development of nosocomial \emph{{C}lostridium difficile\/} infection,
  Microbiome 1~(1) (2013) 18.
\newblock \href {http://dx.doi.org/10.1186/2049-2618-1-18}
  {\path{doi:10.1186/2049-2618-1-18}}.

\bibitem{bakken}
J.~S. Bakken, T.~Borody, L.~J. Brandt, J.~V. Brill, D.~C. Demarco, M.~A.
  Franzos, C.~Kelly, A.~Khoruts, T.~Louie, L.~P. Martinelli, et~al., Treating
  \emph{{C}lostridium difficile\/} infection with fecal microbiota
  transplantation, Clinical Gastroenterology and Hepatology 9~(12) (2011)
  1044--1049.
\newblock \href {http://dx.doi.org/10.1016/j.cgh.2011.08.014}
  {\path{doi:10.1016/j.cgh.2011.08.014}}.

\bibitem{gough}
E.~Gough, H.~Shaikh, A.~R. Manges, Systematic review of intestinal microbiota
  transplantation (fecal bacteriotherapy) for recurrent \emph{{C}lostridium
  difficile\/} infection, Clinical Infectious Diseases 53~(10) (2011)
  994--1002.
\newblock \href {http://dx.doi.org/10.1093/cid/cir632}
  {\path{doi:10.1093/cid/cir632}}.

\bibitem{shahinas}
D.~Shahinas, M.~Silverman, T.~Sittler, C.~Chiu, P.~Kim, E.~Allen-Vercoe,
  S.~Weese, A.~Wong, D.~E. Low, D.~R. Pillai, Toward an understanding of
  changes in diversity associated with fecal microbiome transplantation based
  on 16{S} r{RNA} gene deep sequencing, MBio 3~(5) (2012) e00338--12.
\newblock \href {http://dx.doi.org/10.1128/mBio.00338-12}
  {\path{doi:10.1128/mBio.00338-12}}.

\bibitem{seekatz}
A.~M. Seekatz, J.~Aas, C.~E. Gessert, T.~A. Rubin, D.~M. Saman, J.~S. Bakken,
  V.~B. Young, Recovery of the gut microbiome following fecal microbiota
  transplantation, MBio 5~(3) (2014) e00893--14.
\newblock \href {http://dx.doi.org/10.1128/mBio.00893-14}
  {\path{doi:10.1128/mBio.00893-14}}.

\bibitem{lee}
C.~Lee, J.~Belanger, Z.~Kassam, M.~Smieja, D.~Higgins, G.~Broukhanski, P.~Kim,
  The outcome and long-term follow-up of 94 patients with recurrent and
  refractory \emph{{C}lostridium difficile\/} infection using single to
  multiple fecal microbiota transplantation via retention enema, European
  Journal of Clinical Microbiology \& Infectious Diseases 33~(8) (2014)
  1425--1428.
\newblock \href {http://dx.doi.org/10.1007/s10096-014-2088-9}
  {\path{doi:10.1007/s10096-014-2088-9}}.

\bibitem{rush}
S.~Rush, S.~Pinder, M.~Costa, P.~Kim, A microbiology primer for pyrosequencing,
  Quantitative Bio-Science 31~(2) (2012) 53--81.

\bibitem{junker}
B.~H. Junker, F.~Schreiber, Analysis of Biological Networks, Vol.~2, John Wiley
  \& Sons, 2011.

\bibitem{foti}
N.~J. Foti, J.~M. Hughes, D.~N. Rockmore, Nonparametric sparsification of
  complex multiscale networks, PloS One 6~(2) (2011) e16431.
\newblock \href {http://dx.doi.org/10.1371/journal.pone.0016431}
  {\path{doi:10.1371/journal.pone.0016431}}.

\bibitem{netw}
D.~A. Schult, P.~Swart, Exploring network structure, dynamics, and function
  using {N}etwork{X}, in: Proceedings of the 7th Python in Science Conferences
  (SciPy 2008), Vol. 2008, 2008, pp. 11--16.

\bibitem{fawcett}
T.~Fawcett, An introduction to {ROC} analysis, Pattern Recognition Letters
  27~(8) (2006) 861--874.
\newblock \href {http://dx.doi.org/10.1016/j.patrec.2005.10.010}
  {\path{doi:10.1016/j.patrec.2005.10.010}}.

\end{thebibliography}

\end{document}